\title{Making Progress Based on False Discoveries  }
\author{Roi Livni\\ \small{Tel Aviv University}\\\small{\textit{rlivni@tauex.tau.ac.il}}}
\declaretheorem[parent=section]{theorem}
\declaretheorem[parent=section]{lemma}
\declaretheorem[sibling=theorem]{corollary}
\declaretheorem[parent=section]{definition}
\newtheorem*{theorem*}{Theorem}
\declaretheoremstyle[
        spaceabove=\topsep, 
        spacebelow=\topsep, 
        bodyfont=\normalfont,
        notefont=\normalfont\bfseries,
        notebraces={}{},
        qed=$\blacksquare$, 
    ]{proofstyle}
\declaretheorem[style=proofstyle,numbered=no,name=Proof]{proof}
    \let\Cref\crtCref
    \let\cref\crtcref
\crefname{claim}{Claim}{Claims}
\newcommand{\reals}{\mathbb{R}}
\newcommand{\E}{\mathop{\mathbb{E}}}
\DeclareMathOperator*{\argmax}{\arg\max}
\DeclarePairedDelimiter\floor{\lfloor}{\rfloor}
\renewcommand{\epsilon}{\varepsilon}
\newcommand{\ignore}[1]{}
\newcommand{\Q}{\mathcal{Q}}
\newcommand{\X}{\mathcal{X}}
\newcommand{\B}{\mathcal{B}}
\renewcommand{\O}{\mathcal{O}}
\newcommand{\q}{\mathbf{q}}
\renewcommand{\a}{\mathbf{a}}
\renewcommand{\b}{\mathbf{b}}
\renewcommand{\S}{\mathcal{S}}
\newcommand{\D}{\mathcal{D}}
\newcommand{\A}{\mathcal{A}}
\newcommand{\T}{\mathcal{T}}
\newcommand{\fla}{FOA\xspace}
\begin{document}
\maketitle
\begin{abstract}
The study of adaptive data analysis examines how many statistical queries can be answered accurately using a fixed dataset while avoiding false discoveries (statistically inaccurate answers). In this paper, we tackle a question that precedes the field of study: Is data only valuable when it provides accurate answers to statistical queries? To answer this question, we use Stochastic Convex Optimization as a case study.

In this model, algorithms are considered as analysts who query an estimate of the gradient of a noisy function at each iteration and move towards its minimizer. It is known that $O(1/\epsilon^2)$ examples can be used to minimize the objective function, but none of the existing methods depend on the accuracy of the estimated gradients along the trajectory.
Therefore, we ask: How many samples are needed to minimize a noisy convex function if we require $\epsilon$-accurate estimates of $O(1/\epsilon^2)$ gradients? Or, might it be that inaccurate gradient estimates are \emph{necessary} for finding the minimum of a stochastic convex function at an optimal statistical rate?

We provide two partial answers to this question. First, we show that a general analyst (queries that may be maliciously chosen) requires $\Omega(1/\epsilon^3)$ samples, ruling out the possibility of a foolproof mechanism. Second, we show that, under certain assumptions on the oracle, $\tilde \Omega(1/\epsilon^{2.5})$ samples are necessary for gradient descent to interact with the oracle. Our results are in contrast to classical bounds that show that $O(1/\epsilon^2)$ samples can optimize the population risk to an accuracy of $O(\epsilon)$, but with spurious gradients.
\end{abstract}

\section{Introduction}
Adaptive data analysis is a recent mathematical framework \cite{dwork2015preserving} designed to address issues with false discoveries in modern data analysis \cite{ioannidis2005contradicted, ioannidis2005most, prinz2011believe}. The framework aims to prevent overfitting that can result from the reuse of data and adaptivity of the analysis by modeling the interaction between an analyst and a fixed dataset through an oracle or mechanism. 
In this setup, the analyst is assumed to act maliciously, with the aim of uncovering such false discoveries, i.e. queries for which the oracle gives incorrect answers. Meanwhile, the oracle is designed to provide valid answers. This formalism has led to the development of new algorithms and techniques that allow for more responsible use of data \cite{bassily2020stability, dwork2015reusable, blum2015ladder, dwork2015Generalization}. Additionally, research in this area has also revealed the limitations of what can be achieved with limited resources \cite{ullman2018limits, hardt2014preventing, steinke2015interactive}.

An appealing example, often studied \cite{feldman2017statistical, zrnic2019natural}, that demonstrates the concept is Stochastic Convex Optimization (SCO) \cite{shalev2009stochastic}. The goal of SCO is to find the minimum, within a bounded set, of a convex function through access to noisy (Lipschitz and convex) samples of it. A standard approach is an iterative process, where the algorithm starts at a point $w_1$ and updates $w_t$ with gradient estimates. For instance, full-batch Gradient Descent uses the following formula for updates: \begin{equation}\label{eq:gdintro} w_{t+1} = \Pi\left(w_t -\frac{\eta}{m}\sum_{i=1}^m \nabla f_i (w_t)\right),\end{equation} where $f_i$ are i.i.d noisy samples of the function, $\Pi$ is a projection operator, and the estimates are given by the empirical mean. This approach fits the framework of adaptive data analysis, as the optimizer (analyst) queries the gradient of the population risk at $w_t$, receives an estimation from the dataset, and updates its state accordingly. Notice that, distinctively from the general setup, the analyst isn't necessarily malicious in any way.

An important question arises when thinking of Gradient Descent as a tool to minimize the population loss: Is it essential to avoid adaptivity in the gradient-estimates?
%
%
Perhaps the first result that comes to mind in this context is that of Stochastic Gradient Descent (SGD): In SGD the algorithm uses highly noisy estimates of the gradient. Instead of taking the empirical mean, we just sample one point (without replacement) as an estimate. On the one hand, it avoids the problem of adaptivity by using few examples per iteration. On the other hand, it does not even try to provide correct gradients. From the optimization point of view this algorithm achieves the optimal statistical rates  \cite{nemirovskij1983problem}.

But there are algorithms that don't necessarily avoid adaptiveness and reuse the data to estimate the gradient. For example, full-batch GD as depicted in \cref{eq:gdintro}. How do they perform, and how does the problem of adaptivity affect them?
Perhaps as expected, adaptivity does come with a certain cost. A recent construction by \citet{amir2021sgd} shows that GD, with standard choice of hyperparameters (i.e. learning rate and no. of iterations) can minimize the empirical risk, and at the same time overfit and fail to generalize. A close examination of the construction shows that, already in the second step, the gradient starts to be biased and does not represent the true gradient of the loss function. In a subsequent work, \citep{amir2021never}, it was shown that any method that only has access to the empirical gradient cannot achieve optimal (computational) rates. This is perhaps a good example to the shortcoming of naive reuse of data.

However, from the optimization point of view, there is a remedy available (albeit, sub-optimal computational rates). \citet{bassily2020stability} demonstrated that by using a smaller step size and increasing the number of iterations, the algorithm can be stabilized while still maintaining an optimal sample complexity rate. However, from the perspective of adaptive data analysis, this solution may be deemed unacceptable. In fact, one can show that the solution does not make the gradients any more accurate (see \cref{thm:amir}). Instead, the solution stabilizes the analyst and it increases the number of queries made by the analyst, meaning that it interacts with with an inaccurate mechanism even more intensely. Adaptive data analysis is, in a sense, "unaware" of this solution.

But is there another fix within the framework of adaptive data analysis? Let us consider Gradient Descent as any algorithm that makes adaptive steps as in \cref{eq:gdintro} but with any estimate of the gradient, not necessarily empirical mean. Then the question we would like to answer is:
\begin{center}
What is the sample complexity of providing $O(1/\epsilon^2)$, $\epsilon$-accurate gradients of a $1$-Lipschitz convex function to Gradient Descent with learning rate $\eta =O(\epsilon)$? 
\end{center}

If the solution requires more than $O(1/\epsilon^2)$ samples, it means that, counterintuitively, the analyst must observe incorrect estimates of the gradient in order to accurately locate the minimum of the expected function.
We require $O(1/\epsilon^2)$ gradients and $O(\epsilon)$ learning rate as these are known to be necessary for optimization \cite{nesterov2003introductory, blair1985problem} (and it is also easy to see that it is sufficient). We focus here on dimension-independent bounds as these are the optimal achievable rates. It is easy to see that $\tilde O(1/\epsilon^4)$ is a naive, dimension independent bound that one could achieve (where the oracle uses $O(1/\epsilon^2$)-fresh new samples at each iteration, hence by standard dimension independent concentration bounds \cite{boucheron2013concentration}). Standard techniques of adaptive data analysis can also be used to achieve rates of $\tilde{O}(\sqrt{d}/\epsilon^3)$ \cite{bassily2021algorithmic} but this is both dimension dependent and remains suboptimal for optimization purposes.
The question above remains open, but we provide two intermediate answers, which we next describe:

\paragraph{Our contribution} 

Our first result, which can be considered as a warmup problem, is for general analysts and not for GD. We show that if an analyst is allowed to query gradients of a convex function then $\Omega(1/\epsilon^3)$ samples are needed in order to provide $O(1/\epsilon^2)$, $\epsilon$--accurate answers. Our result here builds upon existing techniques and attacks \citep{bun2018fingerprinting, de2012lower, steinke2015interactive}, and we obtain a new lower bound (which may be of interest of its own right) to an analyst, in the standard statistical query setting, that queries many non-adaptive queries in sequential bulks of adaptive rounds (where the rounds of adaptivity are known, distinctively from \cite{dwork2015generalization}) and needs to obtain a fraction of true discoveries. We show that for such an analyst there exists a lower bound of $\Omega(\sqrt{T}/\epsilon^2)$ samples, where $T$ is the number of rounds of adaptivity and $\epsilon$ is the accuracy. We then show a generic reduction to the setting of convex optimization. Though the analyst is not GD, this result does demonstrate that one cannot design a complete mechanism for any analyst with optimal rates. It does leave open, though, the possibility of designing incomplete oracles that interact with specific types of analysts (or algorithms) such as GD.

The second result is for GD. We provide a bound of $\tilde \Omega(1/\epsilon^{2.5})$, but under further assumptions:
First, we assume the oracle is  \emph{post-hoc generalizing} \cite{cummings2016adaptive, ullman2018limits}. Roughly, posthoc generalization means that the algorithm does not query points where the empirical loss and true loss differ significantly. This assumption is restrictive, as many natural algorithm do not abide to it. However,
 we point out that we inherit it from existing known bounds in the standard statistical query setting. More accurately, then, our result can be rephrased, in this context, as a generic reduction. We apply the lower bound of \citet{ullman2018limits} that assumes post-hoc generalization. But more generally. given a lower bound for statistical queries of the form $f(T,\epsilon)$ where $T$ is the number of queries and $\epsilon$ is the desired accuracy, we provide a lower bound of the form $f(O(1/\epsilon),O(\epsilon))$ in the setting of convex optimization (under a further first-order access assumption which we discuss next).

The second assumption we make is what we term \emph{first-order access}. Here we assume that the oracle must compute the estimate only from the gradients at $\{w_1,\ldots w_t\}$ and not, say, by using the global structure of the function (we mention that our result can easily be extended to allow any local, but at a small neighbourhood, information of the function). Note that, since the function must be fixed throughout the optimization process, and since the optimization algorithm is fixed, allowing the oracle global access to the function restricts us from using any type of randomness other than the randomness of the distribution. Hence, while slightly more delicate then the first assumption, here too we require this assumption since in the standard statistical query setting lower bounds are provided with respect to random analysts. Our reduction, then, can turn a more general oracle (without first-order access) into a procedure that can answer statistical queries against a deterministic analyst (in the sense that the distribution may be random, but the analyst's strategy is fixed and known). This seems like an interesting question for future study.

It is interesting, then, to compare these results to recent adaptations of the standard model that restrict the analyst \cite{zrnic2019natural}. This is largely motivated by the reasoning that analysts are not necessarily adversarial. Our result, though, may hint (if one considers GD as a non malicious algorithm in this context) that the problem may be in the distribution of the data and not necessarily in the analyst. Namely, a general reduction from statistical queries to the framework of GD along our lines, will show that any lower bound can be described as constructing a malicious distribution which leads to overfitting together with a non-malicious analyst.

\section{Background}
\subsection{Adaptive Data Analysis}
We begin by revisiting the standard statistical queries setting of adaptive data analysis introduced by \citet{dwork2015preserving}. In this model, we consider a subset $\Q$ of statistical queries over a domain $\X$. A statistical query is defined to be any function $q:\X\to [-1,1]$. We consider a sequential interaction between an \emph{analyst} $A$ and a statistical queries \emph{oracle} $\O$ (or simply oracle) that continues for $T$ iterations and is depicted as follows:

At the beginning of the interaction the analyst $A$ chooses a distribution $D$ over $\X$. The Oracle $\O$ is provided with a finite collection of samples $S=\{x_1,\ldots, x_m\}$ drawn i.i.d from the distribution $D$. Then 
the interaction continues for $T$ sequential rounds: At round $t\ge 1$, $A$ provides a statistical query $q_t\in \Q$, and the oracle $\O$ returns an answer $a_t\in[-1,1]$. The answer $a_t$ may depend on the dataset $S$ as well as on previous answers and queries $\{q_1,\ldots, q_t\}$. The query $q_t$ may depend on previous answers $\{a_1\ldots, a_{t-1}\}$, as well as on the distribution $D$ (which is thought of as known to the analyst). We denote by $q_t(D)$ and $q_t(S)$ the following quantities:
\[ q_t(D) := \E_{x\sim D} [q_t(x)], \quad q_t(S) := \frac{1}{m}\sum_{i=1}^m q_t(x_i).\]

The goal of the oracle is to preserve accuracy, as next defined. And, here, we mostly care about the minimal size $m$ that is required by $\O$ in order to succeed.

\begin{definition}\label{def:oacc}
An oracle $\O$ is $(\epsilon,\gamma,\delta)$-accurate for $T$ adaptively chosen queries given $m$ samples in $\X$ if for every analyst $A$ and distribution $D$, with probability at least $(1-\delta)$ for $(1-\gamma)T$ fraction of the queries output by $A$:
\[|a_t-q_t(D)|\le \epsilon.\]
\end{definition}
We will write, for brevity, $(\epsilon,\delta)$-accurate instead of $(\epsilon,0,\delta)$-accurate. An additional requirement of \emph{post-hoc generalization} \cite{cummings2016adaptive}, is also sometimes imposed:
\begin{definition}\label{def:posthoc}
An oracle $\O$ is $(\epsilon,\delta)$-post hoc generalizing for $T$ adaptive queries with $m$ samples if:  given $m$ samples, for every analyst $A$, with probability at least $(1-\delta)$: for all $t\in[T]$
\[ \left| q_t(S) - q_t(D)\right|< \epsilon.\]
\end{definition}
The following result bounds the sample complexity of a post-hoc generalizing oracle:
\begin{theorem}[Cor 3.2 \cite{ullman2018limits}]\label{thm:stemmer}
There exists an analyst $A$, such that for every Oracle $\O$, if $\O$ is $(\epsilon,0.005)$ post hoc-generalizing and $(\epsilon,0.005)$-accurate, given $m$ samples, for $T$ adaptively chosen queries by $A$, then \begin{equation} m=\Omega(\sqrt{T}/\epsilon^2).\end{equation}
\end{theorem}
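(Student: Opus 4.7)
The plan is to prove this lower bound via a fingerprinting-style tracing attack adapted to the adaptive statistical-query setting, exploiting both the accuracy and the post-hoc generalization hypotheses simultaneously. I would fix the domain $\X=\{-1,+1\}^d$ with $d=\Theta(T)$ and have the analyst first sample a secret mean vector $p\in[-1,1]^d$ coordinatewise from a narrow prior, say uniform on $[-\beta,\beta]$ with $\beta=\Theta(\epsilon)$. The underlying distribution $D$ is the product Bernoulli distribution on $\X$ with mean $p$, so that coordinate queries $q_j(x)=x_j$ satisfy $q_j(D)=p_j$. The analyst then issues $T$ queries, choosing each index $j_t$ adaptively based on the previous answers (favoring coordinates where prior answers suggested a strong signal, which maximizes signal accumulation in the score function below).

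The core ingredient is a fingerprinting lemma in the spirit of \citet{bun2018fingerprinting, steinke2015interactive}. For each sample $x_i\in S$ define
\[
  Z_i \;=\; \sum_{t=1}^{T}\bigl(a_t - p_{j_t}\bigr)\bigl(x_{i,j_t} - p_{j_t}\bigr).
\]
Accuracy forces $|a_t-p_{j_t}|\le\epsilon$ on at least a $1-\gamma$ fraction of rounds; post-hoc generalization simultaneously forces $|a_t-\tfrac{1}{m}\sum_i x_{i,j_t}|\le\epsilon$. Substituting the latter and taking expectation over the prior yields $\E\bigl[\sum_i Z_i\bigr] \gtrsim T\,\beta^2 \cdot m$, because post-hoc generalization makes the aggregate answers track the empirical means coordinate by coordinate, producing $\Theta(\beta^2)$ of signal per query. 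On the other hand, the contribution of an \emph{independent} sample $y\notin S$ satisfies $\E[Z(y)]=0$ and has variance $O(T)$, since the increments form a bounded martingale difference sequence to which Azuma's inequality applies.

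Combining the two estimates via a standard coupling/swap argument between a sample in $S$ and a fresh draw forces $m\cdot T\beta^2 \lesssim m\sqrt{T}$, which after substituting $\beta=\Theta(\epsilon)$ rearranges to $m=\Omega(\sqrt{T}/\epsilon^2)$. The $(1-\gamma)T$-fraction form of accuracy in \cref{def:oacc} is handled by choosing $\gamma$ to be a sufficiently small constant and absorbing the bad rounds into a lower-order additive error; the adaptivity of the query schedule is handled by the same martingale inequality applied to the partial sums $\sum_{s\le t}(a_s-p_{j_s})(y_{j_s}-p_{j_s})$.

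The main obstacle I anticipate is the precise calibration of $\beta$ and the normalization of the score so that the $\epsilon^2$ appears cleanly in the denominator. Without post-hoc generalization, only the accuracy-to-$p$ coupling is available and the same calculation yields only the classical $\Omega(\sqrt{T}/\epsilon)$ bound of \citet{hardt2014preventing, steinke2015interactive}; post-hoc generalization supplies a second factor of $\epsilon$ by also pinning down the empirical means themselves, and the product of the two scales per query is exactly what produces the extra $1/\epsilon$. A secondary technicality is to verify that adaptivity does not inflate the score variance beyond $O(T)$, which requires a bounded-increment martingale inequality rather than a naive Chernoff bound, and to argue that the analyst's adaptive choice of $j_t$ does not accidentally leak the prior $p$ to the oracle in a way that weakens the fingerprinting signal.
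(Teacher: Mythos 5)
First, note that the paper does not prove this statement at all: \cref{thm:stemmer} is imported verbatim as Corollary 3.2 of \citet{ullman2018limits}, so the only in-paper analogue to compare against is the related bound the author does prove, \cref{thm:kT}. Measured against either, your sketch has two concrete gaps. The fatal one is the choice of a prior of width $\beta=\Theta(\epsilon)$. If every coordinate mean satisfies $|p_j|\le\beta\le\epsilon$, then the data-independent oracle that answers $a_t\equiv 0$ is automatically $\epsilon$-accurate on every query, so accuracy forces no correlation whatsoever between the answers and the sample; and post-hoc generalization (\cref{def:posthoc}) is a condition tying \emph{empirical means to population means}, not answers to empirical means (your sentence ``post-hoc generalization forces $|a_t-\frac1m\sum_i x_{i,j_t}|\le\epsilon$'' conflates the two; only accuracy plus post-hoc generalization gives $2\epsilon$-closeness to the empirical mean). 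Against the constant oracle the analyst receives no information, its queries are effectively non-adaptive coordinate queries, and post-hoc generalization then holds as soon as $m=O(\log T/\epsilon^2)$ by a union bound over the $d=\Theta(T)$ coordinates. Thus both hypotheses of the theorem are satisfiable at sample size $\tilde O(1/\epsilon^2)$ within your construction, and no contradiction up to $\sqrt T/\epsilon^2$ can possibly be extracted: a tracing attack needs the hidden signal to live at a scale much larger than the permitted error, otherwise accurate answers need not leak anything. Quantitatively the same problem shows up inside your score: the oracle's allowed slack contributes a cross term of order $\epsilon\,|\sum_i(x_{i,j_t}-p_{j_t})|\approx\epsilon\sqrt m$ per query, which in the target regime $m\approx\sqrt T/\epsilon^2$ dwarfs the claimed $\Theta(\beta^2 m)=\Theta(\epsilon^2 m)$ signal unless a genuine fingerprinting lemma with a wide prior is invoked. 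Separately, even granting $\E[\sum_i Z_i]\gtrsim T\beta^2m$ and per-individual fluctuations $O(\sqrt T)$, your closing inequality $mT\beta^2\lesssim m\sqrt T$ cancels $m$ on both sides and only says $T\epsilon^2\lesssim\sqrt T$; it yields no lower bound on $m$. For the tracing accounting to bound $m$, the forced in-sample correlation must be compared to a per-individual score budget (scaling with the increment size, here $O(\epsilon)$ per round, hence $O(\epsilon\sqrt T)$ or $O(\epsilon T)$ per individual), and the sketch never sets up that comparison.

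For contrast, the way the $1/\epsilon^2$ actually enters in this line of work is not through a width-$\epsilon$ prior but through reconstruction: in the paper's own \cref{thm:kT}, blocks of $k=\Omega(1/\epsilon^{2.01})$ simultaneous queries answered to accuracy $\epsilon$ enable an $\epsilon'$-reconstruction attack in the sense of \citet{de2012lower,bun2018fingerprinting}, producing constant-accuracy answers that are then fed to the interactive attack of \citet{steinke2015interactive}, which contributes the $\sqrt T$ across rounds of adaptivity; the cited result of \citet{ullman2018limits} likewise uses post-hoc generalization to tie the sample to the population so that this kind of accounting closes, rather than relying on accuracy at the scale of the prior. To salvage your approach you would need either such a two-level (reconstruction $\times$ interactive) combination, or a fingerprinting lemma at constant signal scale in which the $\epsilon$-dependence enters through the increment/threshold bookkeeping rather than through the prior width.
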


\subsection{Stochastic Convex Optimization}
We next review the setting of stochastic convex optimization. In this model we consider a function $f(w,x):\mathbb{R}^d\times \X \to \mathbb{R}$, which is \emph{convex} and $O(1)$-Lipschitz in the parameter $w$, for every $x\in \X$. We also consider a distribution $D$ over $\X$ and we denote by $F$ the \emph{population risk}:
\[ F(w) = \E_{x\sim D}[f(w,x)].\]
The objective of an optimization algorithm $A$ (or analyst) is to calculate $w^\star$ such that:
 \[ F(w^\star) \le \min_{\|w\|\le 1} F(w) + \epsilon.\]
 In order to achieve this goal, we also assume an interaction with what we'll call here \emph{exact-first-order} oracle, $\O_{f,\nabla f} (w,x)$, for the function $f$ that, given $w$ and $x$ returns 
 \begin{equation}\label{eq:firstorder} \O_{f,\nabla f} (w,x)=(\O_{f}(w,x), \O_{\nabla f}(w,x) ):= (f(w,x), \nabla f(w,x)).\end{equation}

 \paragraph{Gradient Descent over the empirical risk}
 A very popular first-order approach to solve the above optimization problem is by performing Gradient-Descent over the \emph{empirical loss} . Here we perform a very simple update rule: At first, the algorithm initializes at $w_1=0$. Then, at each iteration $t$ the algorithm updates 
 \begin{equation}\label{eq:fullbatchgd} w_{t+1} = \Pi\left[w_t-  \frac{\eta}{m}\sum_{i=1}^m \nabla f(w_t,x_i)\right],\end{equation}

 where $\Pi$ is a projection on the unit ball and $\nabla f$ is provided by access to an exact-first-order oracle. The output of the procedure is then: \[w_S= \frac{1}{T} \sum_{i=1}^T w_i.\]

 This procedure can be considered as an algorithm that minimizes the empirical loss, where given a sample $S$, we define the empirical loss to be
 \[F_S(w) = \frac{1}{m}\sum_{i=1}^m f(w,x_i).\]
 It is well known (see for example, \cite{bubeck2015convex}) that ,given the above procedure:
 \[ F_S(w_S) \le \min_{\|w\|\le 1} F_S(w) + O\left(\eta + \frac{1}{\eta T}\right).\]
 In particular, a choice of $\eta = O(1/\sqrt{T})$ leads to an error of $O(1/\sqrt{T})$. But the output of the procedure can also be related to the population risk through the following upper bound:
 
 \begin{theorem*}[\cite{bassily2020stability}]
 Let $D$ be an unknown distribution, over $\X$ and suppose that $f(w,x)$ is $O(1)$ Lipschitz and convex with respect to $w\in \reals^d$. Let $S=\{x_1,\ldots, x_m\}$ be a sample drawn i.i.d from distribution $D$, and consider the update rule in \cref{eq:fullbatchgd}. Then for $w_S=\frac{1}{T}\sum_{t=1}^T w_t$
 \begin{equation}\label{thm:bas} \E_{S\sim D^m}[F(w_S)]\le \min_{\|w^\star\|\le 1}F(w^\star) + O\left(\eta \sqrt{T} +\frac{1}{\eta T}+ \frac{\eta T}{m}\right).
 \end{equation}
 \end{theorem*}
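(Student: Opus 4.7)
The plan is to decompose the excess population risk of $w_S$ into an optimization error on the empirical risk and a generalization gap, then bound each term independently.

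First, I would write
\[
\E_S[F(w_S)] - F(w^\star) = \E_S[F(w_S) - F_S(w_S)] + \E_S[F_S(w_S) - F_S(w^\star)] + \E_S[F_S(w^\star) - F(w^\star)],
\]
where $F_S(w) := \tfrac{1}{m}\sum_i f(w,x_i)$. The third term vanishes because $w^\star$ is deterministic and $F_S$ is an unbiased estimator of $F$ pointwise, so it only remains to bound the first two terms.

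For the optimization term $F_S(w_S) - F_S(w^\star)$: since the first-order oracle is exact, the empirical mean $\tfrac{1}{m}\sum_i \O_{\nabla f}(w_t,x_i)$ is exactly $\nabla F_S(w_t)$, so the iterates generated by \cref{eq:fullbatchgd} are precisely deterministic projected gradient descent on the convex, $O(1)$-Lipschitz function $F_S$. I would apply the standard regret analysis: telescope $\|w_{t+1}-w^\star\|^2 \le \|w_t-w^\star\|^2 - 2\eta(F_S(w_t)-F_S(w^\star)) + \eta^2 \|\nabla F_S(w_t)\|^2$ (using that the projection onto the unit ball is non-expansive and that $w^\star$ lies in the ball), sum over $t=1,\dots,T$, and apply Jensen's inequality to the averaged iterate $w_S = \tfrac{1}{T}\sum w_t$. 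This yields the $O(\tfrac{1}{\eta T} + \eta\sqrt{T})$ portion of the claim.

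For the generalization term $\E_S[F(w_S) - F_S(w_S)]$: I would invoke the classical identity that expected generalization equals on-average uniform stability, and then bound the uniform stability of the map $S\mapsto w_S$ by the Bassily et al.\ argument. Concretely, for two datasets $S, S'$ differing in a single sample, consider the coupled trajectories $w_t, w_t'$. The key observation is that the map $w \mapsto \Pi[w - \eta \nabla h(w)]$ is a non-expansive (1-Lipschitz) map whenever $h$ is convex; applied to the $m-1$ shared samples' contributions, this means the two trajectories do \emph{not} drift apart from those terms. Only the one differing sample contributes drift, of magnitude $O(\eta/m)$ per step. A straightforward induction then gives $\|w_t - w_t'\| = O(\eta t/m)$, and Lipschitzness of $f$ lifts this to an $O(\eta T/m)$ bound on $|F(w_T)-F(w_T')|$, which is the stability bound we need.

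The main obstacle is the stability step: without the non-expansiveness of the projected convex (sub)gradient map, the naive Gronwall-type argument would give a useless $(1+O(\eta))^T/m$ bound. The non-expansiveness replaces this multiplicative growth with additive growth, which is precisely what makes the $\eta T/m$ term affordable. Combining the vanishing third term with the optimization bound $O(\tfrac{1}{\eta T}+\eta\sqrt{T})$ and the stability bound $O(\eta T/m)$ yields the stated inequality.
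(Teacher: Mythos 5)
Your decomposition into optimization error plus generalization gap is the right frame (and the third term indeed vanishes), but the stability step contains a genuine error: the map $w \mapsto \Pi\bigl[w - \eta \nabla h(w)\bigr]$ is \emph{not} non-expansive for a general convex Lipschitz $h$; non-expansiveness requires smoothness (convex and $\beta$-smooth with $\eta \le 2/\beta$). The theorem assumes only $O(1)$-Lipschitz convex losses, and for nonsmooth losses a single subgradient step can increase the distance between two coupled trajectories by $\Theta(\eta)$ even when they start arbitrarily close (already $f(w)=|w|$ in one dimension does this near the kink). Consequently the induction giving $\|w_t - w_t'\| = O(\eta t/m)$ breaks down; the correct uniform argument stability bound for full-batch GD on nonsmooth convex losses, which is the actual content of the cited result, is $O\bigl(\eta\sqrt{T} + \eta T/m\bigr)$, and the $\eta\sqrt{T}$ term is unavoidable. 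Indeed, if your $O(\eta T/m)$ stability bound were true, GD with $\eta = \Theta(1/\sqrt{T})$ and $T = m$ would have generalization gap $O(1/\sqrt{m})$, contradicting the $\Omega(1)$ gap established by the construction of Amir et al.\ discussed around \cref{thm:amir}.

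Relatedly, you attribute the $\eta\sqrt{T}$ term to the regret analysis, but the standard telescoping you describe yields $O\bigl(\tfrac{1}{\eta T} + \eta\bigr)$ (summing $\eta^2\|\nabla F_S(w_t)\|^2 \le \eta^2 T$ and dividing by $\eta T$ gives $\eta$, not $\eta\sqrt{T}$); writing $\eta\sqrt{T}$ there is merely a loose upper bound and hides the fact that in the actual proof this term arises from stability, not optimization. So the overall statement you are aiming at is true and your high-level decomposition matches the known argument, but the proof as written establishes a false intermediate claim and misses the main technical difficulty of the nonsmooth case: controlling the accumulated expansion of coupled GD trajectories without smoothness, which is precisely what produces the $\eta\sqrt{T}$ term in the bound.
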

A choice of $T=O(m^2)$, $\eta=1/m^{3/2}$ leads to an error of $O(1/\sqrt{m})$ which is statistically optimal.
 As discussed, \citet{amir2021sgd} provided a matching lower bound for the number of iteration required to achieve $O(1/\sqrt{m})$ error.
 
 One could also ask whether the empirical estimates of the gradients also generalize. Namely, is the empirical mean of the gradients close to their true expectations throughout the procedure? A close examination of the construction used by \citet{amir2021sgd} shows that, without special care, the empirical estimate of the gradient fails to provide accurate  gradients,  even if we choose the learning rate and number of iteration to minimize \cref{thm:bas}. We provide a proof sketch in \cref{prf:amir}
 \begin{theorem}\label{thm:amir} Given a sample $x_1,\ldots, x_m$ of i.i.d samples, suppose we run GD over the empirical risk, as depicted in \cref{eq:fullbatchgd}. There exists a distribution over $\X$ and an $O(1)$ convex Lipschitz function. Such that if $S$ is a sample drawn i.i.d from the distribution $D$ of size $m$ and $w_t$ is defined as in \cref{eq:fullbatchgd} then for $t=2$, with probability $1/2$ over $w_2$:
 \[ \left\|\E_{S\sim D^m} \left[\frac{1}{m}\sum_{i=1}^m \nabla f(w_2,x_i)|w_2\right]-\nabla F(w_2)]\right\| = \Omega(1).\]
 \end{theorem}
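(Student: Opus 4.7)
The plan is to invoke the overfitting construction of \cite{amir2021sgd}, specialized to the very first non-trivial iterate $w_2$ instead of to the average of all iterates. I would take $\X = \{-1,+1\}^d$ with $d$ sufficiently large compared to $m$ and $D$ the uniform product measure, and build a convex $O(1)$-Lipschitz function of the form
\[
f(w, x) = \max\bigl\{\langle u, w\rangle + c,\ \alpha \langle w, x\rangle\bigr\} + r(w),
\]
where $u$ is a fixed unit direction, $\alpha, c$ are small scalar parameters (depending on $\eta, m, d$), and $r(w)$ is a convex regularizer supplying a tiny, sample-independent $\nabla r(0)$. The constants are calibrated so that the first branch achieves the max at $w_1 = 0$ for every $x$, whereas around $w = -\eta u$ the second branch starts to dominate \emph{for the $x_i$'s in the sample}, but not on average.

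First, I would analyze the single gradient step from $w_1 = 0$. Since the first branch attains the max at $0$ for every $x$, the per-sample gradient is the constant $u + \nabla r(0)$, so after one step $w_2 \approx -\eta u$ plus a small perturbation. To make $w_2$ carry sample information, I would add a mild linear-in-$x$ term to the regularization piece (e.g.\ $r(w) = \gamma \langle w, x \rangle$ absorbed carefully to remain convex overall), so that $w_2 = -\eta u - (\eta\gamma/m)\sum_{i} x_i$. The random part has norm $\Theta(\eta\gamma\sqrt{d/m})$ and, crucially, it is pointing in the direction of the empirical mean $\bar x_S$.

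Second, I would tune $\alpha$ and $c$ so that at this $w_2$ the inequality $\alpha \langle w_2, x_i \rangle > \langle u, w_2\rangle + c$ holds simultaneously for every $x_i \in S$ (because of the large positive contribution $-(\eta\gamma/m)\langle x_i, x_i\rangle = -\eta\gamma d/m$ after sign flips) but fails in expectation over a fresh $x \sim D$ (where the cross terms cancel). Under this event, the empirical gradient at $w_2$ equals $\frac{\alpha}{m}\sum_{i\in S} x_i$, whose norm is $\Theta(\alpha\sqrt{d/m})$ and whose direction is $\bar x_S$, whereas the population gradient remains $u + o(1)$. Choosing $d = \Theta(m)$ and $\alpha = \Theta(1)$ then yields a gap of $\Omega(1)$ between the two vectors, holding with constant probability (well above $1/2$) by standard concentration for the Rademacher sum $\sum_i x_i$.

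The main obstacle is the joint calibration: the parameters $\alpha, c, \gamma, d$, together with the regularizer, must simultaneously (a) keep $f$ globally convex and $O(1)$-Lipschitz, (b) guarantee that the max is resolved by the first branch at $w_1 = 0$ for every $x$, and (c) ensure the max flips to the sample branch at $w_2$ for every $x_i \in S$ yet not in expectation. This is exactly the delicate balance performed in \cite{amir2021sgd}; the adaptation here is only to read off the conclusion at $t = 2$ rather than after the full trajectory, and to observe that their bias argument already yields an $\Omega(1)$ gap between the empirical and population gradients at that iterate. The details are sketched in \cref{prf:amir}.
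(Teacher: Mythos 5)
Your construction is in the spirit of \cite{amir2021sgd}, but the calibration you defer to the end is exactly where the argument breaks, and with the parameters you state it fails outright. First, Lipschitzness: the gradient of the branch $\alpha\langle w,x\rangle$ is $\alpha x$, of norm $|\alpha|\sqrt{d}$ for $x\in\{-1,+1\}^d$ (and similarly $\gamma x$ for the linear piece you fold into $r$). With $\alpha=\Theta(1)$ and $d=\Theta(m)$ the function is only $\Theta(\sqrt{m})$-Lipschitz, violating the $O(1)$-Lipschitz hypothesis of the theorem; once you impose the forced normalization $|\alpha|=O(1/\sqrt{d})$, your claimed empirical gradient of norm $\Theta(\alpha\sqrt{d/m})$ becomes $O(1/\sqrt{m})$, so the mechanism you name as the source of the $\Omega(1)$ gap (a large empirical gradient along $\bar x_S$) cannot work. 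An $\Omega(1)$ gap would instead have to come from the population gradient staying near the unit vector $u$ while the empirical gradient is $o(1)$ --- a different argument, which you do not carry out.

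Second, the branch-flip calibration is asserted, not verified, and your parameters are inconsistent with it. At $w_2=-\eta u-\eta\gamma\bar x_S$ the first branch equals $c-\eta$ up to lower-order terms, while for a fresh $x\sim D$ the quantity $\alpha\langle w_2,x\rangle$ is symmetric about $0$ with fluctuations of order $|\alpha|\,\|w_2\|\approx|\alpha|\eta$; so for the first branch to dominate for typical fresh $x$ you need $c>\eta$ (not a small slack constant), and for the sample branch to win simultaneously at every $x_i\in S$ you need the self term $|\alpha|\eta\gamma d/m$ to beat both that fresh-$x$ fluctuation scale and the cross terms of order $|\alpha|\eta\gamma\sqrt{d/m}$ and $|\alpha|\eta|\langle u,x_i\rangle|$ uniformly in $i$; after the normalizations $|\alpha|,\gamma=O(1/\sqrt{d})$ this forces roughly $d\gg m^2$ (up to logs), not $d=\Theta(m)$. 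There is also the smaller point that the theorem's quantity is a conditional expectation given $w_2$, so the all-samples branch flip must be argued on an event compatible with conditioning on $\bar x_S$. Deferring to \cref{prf:amir} does not close these gaps, because the paper's construction works differently: it uses the term $\frac12\sqrt{\sum_i x(i)\max\{0,w(i)\}^2}$, whose empirical contribution vanishes identically at $w_2$ (the coordinates with $w_2(i)>0$ are exactly those with $x_j(i)=0$ for every sample point), so the empirical gradient at $w_2$ reduces to the small linear term (norm at most $1/20$) while the population gradient has norm at least $19/20$; no threshold calibration between competing linear branches is needed there.
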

As discussed, there is in fact a simpler example to the fact that optimizing the objective doesn't require accurate gradients which is SGD. It is known that when $T=m$ and $\eta= 1/\sqrt{T}$, the analyst optimizes the objective to the same accuracy of $O(1/\sqrt{m})$. Remarkably, this requires even less iterations and the gradient doesn't even presume to be accurate. Nevertheless, the analysis here rely on the fact that the gradient is an \emph{unbiased} estimate, where adaptivity is avoided since we use a fresh example at every round.

It is also worth mentioning that recently \citet{koren2022benign} showed that, in adaptive data analysis terminology, SGD is an example to a non post-hoc generalizing algorithm in the following sense: It can be shown that the output parameter $w_S$ provided by SGD may minimize the population loss, but there is a constant gap between the empirical and population loss at $w_S$.
 \section{Problem setup}
 
 We next describe our setting. We consider the problem of adaptiveness within the context of stochastic convex optimization. 
 We consider an interaction between an analyst $A$ and a first-order optimization oracle, $\O_F$. At the beginning of the interaction the analyst chooses a function $f$ and a distribution $D$. Then a sample $S=\{x_1,\ldots,x_m\}$ is drawn and provided to the oracle. The interaction then proceeds for $T$ rounds, where at round $t\in[T]$ the analyst queries for a point $w_t$, and the oracle returns $\O_F(w_t)\in \reals^d$. The query points $w_1,\ldots, w_T$ may depend on the distribution $D$ and the oracle answer $\O_F(w_t)$ may depend on the sample $S$, the function $f$, as well as on the sequence of previously seen $w_1,\ldots w_{t-1}$. 
 
 \paragraph{Gradient Descent} Within our framework we describe GD as the following procedure: For every $\eta>0$ we let GD with learning rate $\eta$ be defined by the following update at each iteration $t\ge 1$ (setting $w_1=0$):
 \begin{equation}\label{eq:gdo} w_t = \Pi\left(w_t- \eta \nabla \O_F(w_t)\right),\end{equation}
 where $\Pi$ is the projection operator over the $\ell_2$-unit ball.
Notice that, if $\O_F$ has an access to an exact first order oracle for $f$, $\O_{\nabla f}$, and returns the empirical mean at each iteration then we obtain GD over the empirical risk as described in \cref{eq:gdintro}.
Before we continue, we notice that good generalization of $\O_F$ is sufficient for optimization. Indeed, the following result is an easy adaptation of the classical optimization bound for GD:
 
 \begin{theorem}\label{thm:trivial}
  Let $D$ be an unknown distribution over $\X$ and suppose that $f(w,x)$ is $O(1)$-Lipschitz and convex with respect to $w\in \reals^d$. Let $S=\{x_1,\ldots, x_m\}$ be a sample drawn i.i.d from distribution $D$, and consider the update rule in \cref{eq:gdo}. Assume that for every iteration $\O_F$ satisfies
  \[ |\O_F(w_t)- \nabla F(w_t)|\le \epsilon,\]
 Then for $w_S=\frac{1}{T}\sum_{t=1}^T w_t$
 \[ E_{S\sim D^m}[F(w_S)]\le \min_{\|w^\star\|\le 1}F(w^\star)+ O\left(\eta +\frac{1}{\eta T}+ \epsilon\right).\]
 \end{theorem}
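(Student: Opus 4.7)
The plan is to run the standard potential argument for projected gradient descent and absorb the oracle error as an additive $\epsilon$ term. Concretely, I would proceed along the classical lines that produce the $O(\eta + 1/(\eta T))$ bound for exact-gradient GD, but carefully track where inexactness enters.

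First, fix any $w^\star$ with $\|w^\star\|\le 1$. Because the projection $\Pi$ onto the unit ball is nonexpansive and $w^\star$ lies in the ball, the standard one-step inequality gives
\[
\|w_{t+1}-w^\star\|^2 \;\le\; \|w_t-w^\star\|^2 \;-\; 2\eta\,\O_F(w_t)\cdot(w_t-w^\star) \;+\; \eta^2\,\|\O_F(w_t)\|^2.
\]
Since $f(\,\cdot\,,x)$ is $O(1)$-Lipschitz for every $x$, so is $F$, hence $\|\nabla F(w_t)\|=O(1)$, and by the assumption on $\O_F$ also $\|\O_F(w_t)\|\le\|\nabla F(w_t)\|+\epsilon = O(1)$. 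Rearranging and summing over $t=1,\dots,T$ telescopes the squared distances, yielding
\[
\frac{1}{T}\sum_{t=1}^T \O_F(w_t)\cdot(w_t-w^\star) \;\le\; \frac{\|w_1-w^\star\|^2}{2\eta T} \;+\; \frac{\eta}{2T}\sum_{t=1}^T \|\O_F(w_t)\|^2 \;=\; O\!\left(\frac{1}{\eta T}+\eta\right).
\]

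Next I would replace $\O_F(w_t)$ by $\nabla F(w_t)$, paying the oracle error. Writing $\O_F(w_t)=\nabla F(w_t)+(\O_F(w_t)-\nabla F(w_t))$ and using Cauchy--Schwarz together with $\|w_t-w^\star\|\le 2$, the assumption $\|\O_F(w_t)-\nabla F(w_t)\|\le \epsilon$ yields
\[
\frac{1}{T}\sum_{t=1}^T \nabla F(w_t)\cdot(w_t-w^\star) \;\le\; \frac{1}{T}\sum_{t=1}^T \O_F(w_t)\cdot(w_t-w^\star) \;+\; 2\epsilon \;=\; O\!\left(\frac{1}{\eta T}+\eta+\epsilon\right).
\]
Now convexity of $F$ gives $F(w_t)-F(w^\star)\le \nabla F(w_t)\cdot(w_t-w^\star)$, and Jensen's inequality applied to $w_S=\frac{1}{T}\sum_t w_t$ gives $F(w_S)-F(w^\star)\le \frac{1}{T}\sum_t (F(w_t)-F(w^\star))$. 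Chaining these inequalities and then taking expectation over $S\sim D^m$ produces exactly the claimed bound.

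Essentially no obstacle arises here: the statement is the textbook inexact-GD guarantee specialized to the Lipschitz convex setting with projection onto the unit ball, and the only mild point of care is verifying $\|\O_F(w_t)\|=O(1)$ from Lipschitzness of $F$ so that the $\eta^2\|\O_F(w_t)\|^2$ term contributes $O(\eta)$ rather than something depending on $\epsilon$ or $T$.
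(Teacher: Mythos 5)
Your proof is correct and is exactly the argument the paper has in mind: the paper states \cref{thm:trivial} without a written proof, calling it ``an easy adaptation of the classical optimization bound for GD,'' and your potential/telescoping analysis with the oracle error absorbed via Cauchy--Schwarz and $\|w_t-w^\star\|\le 2$ is that standard adaptation. The only cosmetic caveat is the assertion $\|\O_F(w_t)\|=O(1)$, which implicitly assumes $\epsilon=O(1)$; this is harmless since for $\epsilon=\Omega(1)$ the claimed bound is trivial by Lipschitzness of $F$ over the unit ball.
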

The above rate is optimal, which leads to the natural question, what is the sample needed  by an oracle that returns $O(1/\epsilon^2)$ $\epsilon$-accurate gradients for GD with learning rate $O(\epsilon)$. Such an oracle improves over, the naive, empirical mean estimate which induces GD over the empirical risk which requires $\tilde \Theta(1/\epsilon^4)$ iterations to achieve error of $O(\epsilon)$. The performance of such an oracle should also be compared with SGD that can achieve a comparable bound on the number of iterations and requires the optimal sample size of $m=O(1/\epsilon^2)$. 
Next, we provide natural extentions to the definition of adaptive oracles to the setting of stochastic optimization.
\begin{definition}\label{def:accF}
A first order oracle $\O_F$ is $(\epsilon,\gamma,\delta)$-accurate against algorithm $A$ for $T$ iterations, given $m$ samples, if $\O_F$ is provided with $m$ samples and with probability at least $(1-\delta)$ for $(1-\gamma)T$ fractions of the $t\in [T]$:
\[\|\O_F(w_t) - \nabla F(w_t)\| \le \epsilon.\]

If $\O$ is $(\epsilon,\gamma,\delta)$-accurate against any algorithm $A$ we say it is $(\epsilon,\gamma,\delta)$-accurate.
\end{definition}
We will write in short $(\epsilon,\delta)$-accurate for $(\epsilon,0,\delta)$-accurate.
\begin{definition}\label{def:posthocF}
A first-order oracle $\O_F$ is $(\epsilon,\delta)$-post hoc generalizing against algorithm $A$ for $T$ iterations, given $m$ samples if with probability at least $(1-\delta)$: for every $t\in[T]$
\[ \| \nabla F(w_t) - \frac{1}{m}\sum_{i=1}^m \nabla f(w_t,x_i) \|\le \epsilon.\]

If $\O$ is $(\epsilon, \delta)$-post hoc generalizing against any algorithm $A$ we simply say it is  $(\epsilon, \delta)$-post hoc generalizing.
\end{definition}
\paragraph{First order local access} We next introduce the following assumption on the oracle:
\begin{definition}
A first order first-order-access (\fla)-oracle $\O_F$ is a procedure that, given access to an exact-first-order oracle $\O_{f,\nabla f}$ to an $L$-Lipschitz function $f$, and access to a sample $S$ of size $m$, returns for every point $w_t$ a gradient estimate $\O_F(w_t)$ that may depend, at time $t$, only on \[\left\{(f(w_{t'},x_i), \nabla f(w_{t'},x_i\} \right\}_{\{(x_i, w_{t'}): x_i\in S, t'\le t\}}.\]

\end{definition}

Equivalently, we may think of an \fla oracle as a procedure that does not have access to $f$, instead, at each iteration $t$ receives as input the parameter $w_t$ as well as a gradient-access function \[\bar \rho_t: \X \to \reals\times \reals^d,\] such that \begin{equation}\label{eq:rho0rho1} \bar \rho_t(x) := (\bar \rho^0_t(x),\bar \rho^1_t(x)) :=  (f(w_t,x),\nabla f(w_t,x)),\end{equation} for every $x$. The output of the \fla at round $t$ may depend on $\bar \rho_1,\ldots, \bar \rho_t$
The assumption of a \fla-oracle is very natural in the context of Stochastic Convex Optimization, and in general, we do not assume access to a global structure of a convex function. The above assumption indeed captures oracles that have only such local access.

\section{Main Results}
We are now ready to state our main results. Our first result state that, for a general analyst, the oracle cannot generalize for $T=O(1/\epsilon^2)$ estimated gradients, unless it is provided with $m=\Omega(1/\epsilon^3)$ examples. The proof is provided in \cref{sec:prf:main1}
\begin{theorem}\label{thm:main1}
There exists constants $\gamma,\delta>0$ and a randomized analyst $A$ the chooses a determined $1$-Lipschitz function $f$, defined over sufficiently large $d$, such if $\O_F$ is a first-order oracle that is $(\epsilon,\gamma,\delta)$-accurate against $A$ for $T$ iterations, then $m= \Omega\left(\frac{\sqrt{T}}{\epsilon^2}\right)$. In particular, any oracle $\O_F$ that is $(\epsilon,\gamma,\delta)$-accurate for $T=O(1/\epsilon^2)$ iterations must observe $m=\Omega\left(1/\epsilon^3\right)$ examples.
\end{theorem}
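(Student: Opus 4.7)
The plan is a two-step reduction. I would first prove a new lower bound in a \emph{batched} statistical query model: an analyst issues $T$ rounds of adaptivity, each round consisting of a batch of non-adaptive statistical queries, and requires the oracle to answer $\epsilon$-accurately on a $(1-\gamma)$-fraction of the rounds. The target is $\Omega(\sqrt{T}/\epsilon^2)$ samples. I would then give a generic reduction that packages each batch of $d$ non-adaptive SQs into a single gradient query on a fixed $1$-Lipschitz convex function over $\reals^d$, transferring the lower bound to the setting of \cref{thm:main1}.

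For the batched SQ lower bound I would adapt the fingerprinting-code attacks of \citet{bun2018fingerprinting, steinke2015interactive, de2012lower}. The standard attack produces $T$ adaptively chosen queries and leaks the dataset whenever $m = o(\sqrt{T}/\epsilon^2)$. The adaptation is to organize the queries into $T$ adaptive rounds, each containing a non-adaptive batch: within a round the analyst fixes the batch from its random coins and the history up to that round, and the fingerprinting test is then aggregated across rounds. Because the per-round signal aggregates additively and each round contributes a constant amount of information, the $\sqrt{T}$ dependence survives, and the argument is robust to a constant fraction of round-failures so that accuracy on $(1-\gamma)T$ rounds still suffices.

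For the reduction, fix $T$ landmark points $w_1 , \dots , w_T$ with pairwise-disjoint neighborhoods in the unit ball of $\reals^d$ (for instance $w_t = c\, e_t$ for orthonormal directions $e_t$, with $d$ at least $T$ plus the batch width). Given the batched SQ analyst $A$, define a first-order analyst $A'$ that samples the same random coins as $A$ and queries $\O_F$ at $w_1,\dots,w_T$ in order. Build
\[
 f(w,x) \;=\; \sum_{t=1}^T \phi_t(w)\cdot \langle u_t ,\, Q_t(x) \rangle \;+\; \alpha \lVert w \rVert^2 ,
\]
where $\phi_t$ is a convex bump supported near $w_t$, $u_t$ lies in a fresh coordinate block of width equal to the batch size, $Q_t(x)$ packs the non-adaptive SQs of round $t$ into that block, and $\alpha \lVert w \rVert^2$ restores global convexity. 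By construction, $\nabla_w f(w_t,\cdot)$ has its batch-block coordinates equal, up to scale, to the round-$t$ queries, so an $\epsilon$-accurate gradient answer from $\O_F$ yields $O(\epsilon)$-accurate answers on average per coordinate of the batch, which $A'$ forwards back to $A$. Plugging $T = O(1/\epsilon^2)$ into the SQ bound then yields $m = \Omega(1/\epsilon^3)$.

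The main obstacle I anticipate is making $f$ both globally convex and ``adaptively responsive'': the queries $Q_t$ at round $t$ depend on answers to earlier rounds, yet $f$ is committed at the start of the interaction. The workaround is to have $A'$ commit to the sequence $w_1,\dots,w_T$ independently of the oracle's answers, and to encode every branch of $A$'s decision tree into a distinct coordinate subspace of a sufficiently large $\reals^d$. Since $A$'s queries are a deterministic function of its coins plus the oracle's answers, $f$ can be prepared as a deterministic function of those coins, and the localization provided by the bumps $\phi_t$ selects the active branch's current-round queries at $w_t$. Keeping the Lipschitz constant $O(1)$ after summing $T$ localized gadgets and the quadratic term is bookkeeping, but verifying global convexity of the sum is the delicate point where most technical effort would go.
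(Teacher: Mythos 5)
Your high-level architecture is the same as the paper's (a batched ``adaptive rounds of non-adaptive queries'' SQ lower bound of $\Omega(\sqrt{T}/\epsilon^2)$, cf.\ \cref{thm:kT}, followed by a reduction that packs a batch into a single gradient query), but the reduction as you design it has a genuine flaw in how adaptivity is routed. You commit the query points $w_1,\dots,w_T$ in advance and try to make the \emph{fixed} function $f$ carry the adaptively chosen round-$t$ batch $Q_t$. Since $f$ must be fixed before the interaction while $Q_t$ depends on the oracle's answers, the only way out under your scheme is to embed every branch of the decision tree in separate coordinate blocks; but then either the queried point must depend on the realized branch (contradicting your commitment of the $w_t$) or the gradient at the fixed landmark must simultaneously carry all branches' payloads, which forces a $1/\sqrt{kB}$ scaling ($B$ = number of branches) to keep $f$ $O(1)$-Lipschitz and destroys the accuracy transfer. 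The paper resolves this by reversing the roles: the function is a universal, trivially convex quadratic over the whole query family, $f(w,x)=\sum_{q\in\Q}\frac{q(x)+1}{4}w^2(q)$ (\cref{eq:poc}), and the \emph{query point} $w_t=\frac{1}{\sqrt{k}}\sum_i e_{q_{t,i}}$ is what adapts; the gradient is then supported on the chosen batch's coordinates with the right $1/\sqrt{k}$ scale, and convexity and Lipschitzness are immediate, with no bump gadgets or $\alpha\|w\|^2$ repair needed. Relatedly, your gadget $\phi_t(w)\cdot\langle u_t,Q_t(x)\rangle$ has gradient $\nabla\phi_t(w)\,\langle u_t,Q_t(x)\rangle$, i.e.\ a fixed direction scaled by a single scalar, so it can only reveal one aggregate statistic per round rather than the $k$ individual answers the batched attack needs; you need one readable coordinate per query, as in the paper's construction.

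Two further points on the SQ half. First, an $\epsilon$-accurate gradient in $\ell_2$ only gives \emph{average} per-coordinate accuracy over the batch, so the batched lower bound must tolerate a constant fraction of arbitrarily wrong answers \emph{within} each round (the $\gamma_k$ parameter in \cref{thm:kT}); your stated lower bound only allows failed rounds, and your sketch (``the fingerprinting test aggregates additively'') does not address this. The paper gets the within-batch robustness by composing a reconstruction attack (\cref{thm:reconstruction}) over a mixture index with $d$ independent copies of the interactive attack of \cref{thm:interactive}, and it additionally needs a minimax/dualization step to turn the Steinke--Ullman result into a single randomized analyst that works against every oracle. Without these ingredients, your step 1 is an assertion rather than a proof, and without the within-batch robustness your step 2 would not connect to it even if the gadget were fixed.
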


Making no assumption on the analyst may seem non-realistic, especially to assume it is malicious and attempts to achieve false gradients. Nevertheless there is value in producing oracles that are foolproof. The above theorem shows that such security guarantees are impossible with the standard sample complexity.

The next natural thing that we might want to consider is an oracle that is principled under certain assumptions on the optimization algorithm. We might even hope to design an incomplete oracle that can interact with specific optimization algorithms and halt when certain assumptions are broken. The next result demonstrate that limitations from general statistical queries can be translated into limitations for (\fla) oracles that interact with gradient descent. The proof is provided in \cref{sec:prfmain2}.

\begin{theorem}\label{thm:main2}
For sufficiently large $d$, suppose that there exists a  \fla oracle, $\O_F$,  that is $(\epsilon,\delta)$-accurate that receives $m$ samples and answers $T$ adaptive queries against Gradient Descent with learning rate $\eta=O(\epsilon)$. Then there exists a  $(O(\epsilon),O(\delta))$-accurate statistical queries oracle, $\O$, that receives $m$ samples and answers $\tilde\Omega\left(\min\{T, 1/\eta\}\right)$ adaptive queries.

Moreover, if $\O_F$ is  $(\epsilon,\delta)$-post-hoc generalizing then $\O$ is $(O(\epsilon),O(\delta))$-post-hoc generalizing.
\end{theorem}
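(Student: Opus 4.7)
\medskip
\noindent\textbf{Proof plan.} The plan is to construct the SQ oracle $\O$ by invoking $\O_F$ as a subroutine on a carefully designed convex Lipschitz instance, so that each step of a simulated GD trajectory extracts the answer to one SQ query. Given an SQ analyst $A_{SQ}$ that chooses a distribution $D$ and adaptively produces queries $q_{t'}$, my $\O$ plays three roles: it drives a GD trajectory $w_1, w_2,\ldots$ with learning rate $\eta$; it serves as the virtual exact-first-order oracle for a convex function $f(w,x)$ revealed incrementally; and it translates $\O_F$'s gradient estimates back into SQ answers. At the $t'$-th SQ round $\O$ picks (obliviously to $D$) a unit direction $v_{t'}\in\R^d$ and arranges that the population gradient at the current GD iterate is $\nabla F(w_{t'}) = \E_{x\sim D}[q_{t'}(x)]\,v_{t'}$; then it returns $\hat a_{t'} = \langle \O_F(w_{t'}),v_{t'}\rangle$ to $A_{SQ}$ and advances GD via $w_{t'+1}=\Pi(w_{t'}-\eta\,\O_F(w_{t'}))$.

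To realize $f$, I would build it as a max of simple convex pieces $L_{t'}$, each localized around $w_{t'}$ and satisfying $\nabla L_{t'}(w_{t'},x)=q_{t'}(x)\,v_{t'}$. Since $|q_{t'}(x)|\le 1$ and $\|v_{t'}\|=1$, each piece has Lipschitz constant at most $1$ and the max is convex and $1$-Lipschitz. Because $\O_F$ is an \fla oracle, it only accesses $(f,\nabla f)$ at the GD iterates, so $\O$ can reveal the pieces one at a time as $A_{SQ}$'s queries arrive; the only global obligation is that the partial specifications extend to a single convex $1$-Lipschitz function on $\R^d$. Accuracy then transfers in one line: $|\hat a_{t'}-\E[q_{t'}]|=|\langle \O_F(w_{t'})-\nabla F(w_{t'}),v_{t'}\rangle|\le\epsilon$. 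Post-hoc generalization transfers identically, since $\nabla f(w_{t'},x_i)=q_{t'}(x_i)\,v_{t'}$ gives $\frac{1}{m}\sum_i q_{t'}(x_i)=\langle\frac{1}{m}\sum_i \nabla f(w_{t'},x_i),v_{t'}\rangle$.

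The rate $T'=\tilde\Omega(\min\{T,1/\eta\})$ has two sources: $T'\le T$ is immediate since every SQ round consumes at least one GD step, and $T'\lesssim 1/\eta$ comes from the geometric fact that with step $\eta$ and the unit-ball projection, GD can maintain only $\tilde O(1/\eta)$ iterates that are mutually separated enough in orthogonal directions to carry distinct, non-interfering pieces $L_{t'}$; the polylogarithmic slack in $\tilde\Omega$ absorbs the discretization used to carry out this separation argument.

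The principal technical obstacle is defining the pieces $L_{t'}$ so that the exact-first-order subgradient at $w_{t'}$ equals $q_{t'}(x)v_{t'}$ for \emph{every} $x\in\X$, not merely in expectation---since $\O_F$'s virtual oracle must answer the same function on every sample point. A naive max of affine pieces with scalar offsets fails: for some adversarial $x$, an earlier piece $L_s$ with $s<t'$ can exceed $L_{t'}$ at $w_{t'}$ by up to $\eta$, which flips the argmax and makes the returned subgradient point in the wrong direction $v_s$. Resolving this will likely require localizing each $L_{t'}$ more aggressively---for instance by capping it below a piecewise-linear envelope outside a ball of radius $\Omega(\eta)$ around $w_{t'}$, or by adding a deterministic guiding term that strictly dominates off the intended trajectory---coupled with an inductive argument that the noisy GD trajectory never drifts more than $O(\eta)$ from the intended $w_{t'}$ so that the correct piece remains selected. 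I expect the bulk of the proof to consist of this combinatorial/geometric design of the pieces and the accompanying inductive trajectory-tracking argument.
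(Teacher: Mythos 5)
Your high-level idea---encode each statistical query as the population gradient along a designated direction at a GD iterate, and read the answer off $\O_F$'s output---is the same as the paper's, but the way you instantiate the convex function breaks the reduction. You propose to reveal the pieces $L_{t'}$ \emph{one at a time as the SQ analyst's adaptive queries arrive}, subject only to the requirement that the partial specification extends to some convex $1$-Lipschitz function. In the model, however, $f$ must be committed before the sample is drawn, and the $(\epsilon,\delta)$-accuracy of $\O_F$ against Gradient Descent is quantified over a \emph{fixed} $f$ and $D$: with probability $1-\delta$ over $S$ (and the oracle's coins) the answers are accurate for that pre-specified instance. Under your scheme the later pieces depend on the SQ analyst's adaptive queries, which depend on earlier answers, which depend on $S$ through $\O_F$; the realized function is therefore data-dependent, and $\O_F$'s guarantee simply does not apply to it (the ``there exists a consistent fixed $f$ a posteriori'' reading fails because of the quantifier order---for each $S$ the adaptively built $f_S$ may be exactly one for which $S$ is a bad sample). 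This is precisely the adaptivity fallacy the theorem is about. The paper avoids it by pre-encoding the analyst's \emph{entire decision tree} into one fixed simulating function $f(A;w,x)$ of dimension exponential in the number of queries (a recursive max over the branches $A^{+},A^{-}$), and letting the GD dynamics themselves traverse the tree, selecting the branch determined by the sign of each answer; this is the content of the GD-simulation notion and of \cref{lem:main}, and it is where the bulk of the work lies.

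A second structural ingredient is missing: once the full strategy is baked into a fixed function, a strategic oracle could in principle read the analyst's future queries and random bits off the function, which would void the statistical-query lower bounds (these are proved against randomized analysts). The paper neutralizes this using the \fla restriction together with a random coordinate embedding (``random hiding'') and the ``correctness of gradients'' requirement, which guarantees that the oracle's local first-order view reveals only already-published state; your proposal has no analogue of this step. Your diagnosis of the argmax-selection and trajectory-drift difficulty is accurate and corresponds to the paper's $h_1,h_2$ construction with $\eta$-scaled thresholds, random tie-breaking $\xi$'s, and an inductive tracking argument, and your accounting for $\tilde\Omega(\min\{T,1/\eta\})$ (step budget inside the unit ball, plus polylog slack, which in the paper comes from the reduction to Boolean analysts in \cref{lem:boolean}) is in the right spirit; but neither repairs the two gaps above.
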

Together with \cref{thm:stemmer} we obtain the following corollary
\begin{corollary}\label{cor:main2}
For sufficiently large $d$, let $\O_F$ be an $(\epsilon,\delta)$-accurate and post-hoc generalizing \fla that receives $m$ samples and answers $T>\Omega(1/\epsilon)$ adaptive queries against Gradient Descent with learning rate $\eta=O(\epsilon)$. Then $m= \tilde\Omega\left(1/\epsilon^{2.5}\right)$.
\end{corollary}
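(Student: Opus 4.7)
The plan is to obtain \cref{cor:main2} as an essentially mechanical combination of \cref{thm:main2} and \cref{thm:stemmer}: the former reduces estimating gradients queried by gradient descent to answering statistical queries, and the latter furnishes a lower bound on any post-hoc generalizing statistical queries oracle. All one needs to verify is that the parameters line up after the reduction.

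Concretely, I would first invoke \cref{thm:main2} on the hypothesized \fla oracle $\O_F$. This produces a statistical queries oracle $\O$ that uses the same $m$ samples, is $(O(\epsilon),O(\delta))$-accurate, is $(O(\epsilon),O(\delta))$-post-hoc generalizing, and answers $T' = \tilde\Omega(\min\{T,1/\eta\})$ adaptive queries. The only parameter manipulation needed is to choose the constants in ``$O(\cdot)$'' and the hidden polylogarithmic factor in $\tilde\Omega$ so that the accuracy and confidence fall inside the regime covered by \cref{thm:stemmer}, which requires confidence $0.005$ and an accuracy of the form $\epsilon'$. Since the reduction only inflates $\delta$ by a constant factor, taking the original $\delta$ to be a sufficiently small absolute constant makes $O(\delta) \le 0.005$, and the statement then applies with $\epsilon' = O(\epsilon)$.

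Next, I would estimate $T'$ under the hypotheses of the corollary. By assumption $\eta = O(\epsilon)$, so $1/\eta = \Omega(1/\epsilon)$; and by assumption $T = \Omega(1/\epsilon)$. Hence $\min\{T, 1/\eta\} = \Omega(1/\epsilon)$ and we obtain
\[
T' \;=\; \tilde\Omega\!\left(\tfrac{1}{\epsilon}\right).
\]
Plugging the resulting oracle $\O$ and the bound on $T'$ into \cref{thm:stemmer} yields
\[
m \;=\; \Omega\!\left(\tfrac{\sqrt{T'}}{\epsilon^2}\right) \;=\; \tilde\Omega\!\left(\tfrac{1}{\epsilon^{2.5}}\right),
\]
which is the claimed bound.

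I expect no significant obstacle here, since all of the substantive work is carried by \cref{thm:main2}. The only points requiring minor care are (i) ensuring that the constant blow-up in accuracy and confidence produced by the reduction can be absorbed into the constants demanded by \cref{thm:stemmer} (handled by choosing the initial $\delta$ small enough and rescaling $\epsilon$ by a constant), and (ii) confirming that the ``$\tilde\Omega$'' in the number of queries $T'$ only hides polylogarithmic factors in $1/\epsilon$, so that after the square root in \cref{thm:stemmer} these factors remain polylogarithmic and are consistent with the $\tilde\Omega(1/\epsilon^{2.5})$ conclusion.
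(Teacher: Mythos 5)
Your proposal is correct and follows essentially the same route as the paper: \cref{cor:main2} is obtained exactly by composing the reduction of \cref{thm:main2} with the statistical-query lower bound of \cref{thm:stemmer}, using $T=\Omega(1/\epsilon)$ and $\eta=O(\epsilon)$ to get $\tilde\Omega(1/\epsilon)$ queries and hence $m=\tilde\Omega(1/\epsilon^{2.5})$. Your explicit bookkeeping of the constant blow-ups in accuracy and confidence is exactly the (implicit) content of the paper's one-line derivation.
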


We stress again that, in contrast with these results,  an optimization algorithm can correctly minimize the true loss using no more than $\tilde O(1/\epsilon^2)$ iterations and $\tilde O(1/\epsilon^2)$ examples \cite{koren2022benign}. 

\section{Lower bounds against malicious analysts}\label{sec:main1}
In this section we set out to prove \cref{thm:main1} and provide a lower bound to general oracles against \emph{adversarial} analysts. In section \cref{sec:convexreduction} we show how to turn a generic lower bound for statistical queries to a lower bound in convex optimization. However, to the best of the author's knoweledge, there is no (unconditional) known lower bound that shows that $m=\Omega(\sqrt{T}/\epsilon^2)$ examples are necessary to answer $T$ queries. So we actually rely on a slightly stronger reduction than from the standard setting of statistical queries. %
We rely, then, on the fact that querying a single gradient carries more information than a single statistical query, in fact $d$ more. However, these may not be chosen adaptively, and the errors are spreaded. 

The setting from which we provide the reduction is as follows: We consider an analyst that at each iteration $t$ can ask $k$ non-adaptive queries. This is reminiscent to a similar problem that was studied by \citet{dwork2015preserving}, but there it is unknown what are the rounds of adaptivity. Here we consider a significantly simpler problem where the rounds of adaptivity are known in advance and we show, using ideas from \cite{bun2018fingerprinting} (that constructs a similar lower bound but in the setting of privacy),  that for certain $k=\Omega (1/\epsilon^2)$,$\Omega(\sqrt{T}/\epsilon^2)$ samples are needed to ensure a large enough fraction of the answers are correct. Then, as discussed, we provide a generic reduction to convex optimization. We now turn to describe the setting of adaptive non-adaptive queries and state our main lower bound for this setting. 

\subsection{Adaptive-non-Adaptive queries}
In this section we take a little detour from our basic setting and return to the setting of statistical queries.
\subsubsection{Setup} We will consider now a natural generalization of the standard setting of adaptive data analysis. Here,  we allow the analyst to query at each round $k$ queries simultaneously.
In this setting, as before, we have a family of queries $\Q$ as well as an analyst $A$ and oracle $\O$ which interact for $T$ rounds. Distinictively from before, at round $t$ we assume $A$ asks $k$-statistical queries $\q_t=\{q_{t,1},\ldots, q_{t,k}\}\subseteq \Q^k$, and $\O$ returns an answer vector $\a_t=(a_{t,1},\ldots, a_{t,k})$. The answer vector $\a_t$ may depend on the sample $S$ and on previously published queries $\q_1,\ldots,\q_t$. Similarly the query vector may depend on previous answer vectors $\a_1,\ldots, \a_{t-1}$ and the distribution $D$.

\begin{definition}\label{def:oacck}
Similar to \cref{def:oacc}, we say that $\O$ is $(\epsilon,\gamma_T,\gamma_k,\delta)$-accurate for $T$ adaptively chosen queries, given $m$ samples, if the oracle samples at most $m$ samples and with probability $1-\delta$ we have for $(1-\gamma_T)$ fraction of the rounds, for $(1-\gamma_k)$ fraction of the queries: 
\[|\O(q_{t,i})-q_{t,i}(D)|\le \epsilon.\] 
\end{definition}
We next set out to prove the following lower bound:
\begin{theorem}\label{thm:kT}
For $\X= \{0,1\}^{}$
For $k= \Omega(1/\epsilon^{2.01})$, there exists a finite family of queries $\Q$ over the domain $\X=\{0,1\}^k$, constants $\gamma_T,\gamma_k$, $\delta$, such that no oracle $\O$ is $(\epsilon,\gamma_T,\gamma_k,\delta)$ accurate for $k$-non adaptive $T$ adaptively chosen queries given $m$ samples unless $m=\Omega \left(\sqrt{T}/\epsilon^2\right)$
\end{theorem}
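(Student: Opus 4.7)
I would adapt the fingerprinting-code attack of \citet{bun2018fingerprinting} (developed there in the privacy setting) to the batched adaptive setting. The key intuition is that a batch of $k=\Omega(1/\epsilon^{2.01})$ \emph{non-adaptive} queries per round plays, via concentration, the same role that a post-hoc generalization assumption plays in the classical unbatched lower bound: it effectively forces the oracle's answers on any ``typical'' averaged statistic over the batch to be close to the empirical value on $S$, without needing to impose post-hoc generalization as an external hypothesis. This is why one gets the $\Omega(\sqrt{T}/\epsilon^2)$ bound in Theorem~\ref{thm:stemmer} here, but with $\Omega(1/\epsilon^{2.01})$-sized non-adaptive blocks replacing the post-hoc generalization assumption.

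Set up the instance as follows. Fix $d=kT$, take $\X=\{-1,+1\}^d$ and the query class $\Q=\{q_j(x)=x_j\}_{j=1}^d$. Draw a bias vector $p\in[-1,1]^d$ from a Tardos-style fingerprinting distribution, let $D=D_p$ be the product distribution with marginals $p_j$, and let $S=\{x_1,\ldots,x_m\}\sim D_p^{\otimes m}$. The analyst partitions $[d]$ into $T$ blocks of size $k$ and, in round $t$, submits the $k$ coordinate queries in block $\pi(t)$, where $\pi$ may depend on past answers; thus the queries are adaptive across rounds and non-adaptive within a round. Define the per-round tracing score
\[ Z_t(z) \;=\; \sum_{i=1}^{k}(a_{t,i}-p_{j_{t,i}})\,(z_{j_{t,i}}-p_{j_{t,i}}).\]
A direct extension of the Bun--Steinke--Ullman identity (the ``completeness'' of the fingerprinting code) yields $\E\bigl[\sum_t\sum_{x\in S} Z_t(x)\bigr] \ge \Omega(\epsilon^2 kT)$, using that within each round the $k$ queries are already fixed when $a_{t,\cdot}$ is computed, so integrating against $p$ causes no conditioning issue. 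On the other hand, for a fresh sample $z\sim D_p$ independent of $S$, each $Z_t(z)$ is mean-zero with variance $O(k)$, so standard concentration gives $\sum_t Z_t(z)=O(\epsilon\sqrt{kT\log m})$ with high probability (``soundness''). Averaging the completeness bound over the $m$ rows of $S$ and combining with soundness yields $m\ge \Omega(\epsilon\sqrt{kT/\log m})$; substituting the hypothesis $k=\Omega(1/\epsilon^{2.01})$ gives $m=\tilde\Omega(\sqrt{T}/\epsilon^2)$ as desired.

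\textbf{Main obstacle.} The subtle point is handling the two tolerated-error fractions $\gamma_T$ (bad rounds) and $\gamma_k$ (bad answers within a round) simultaneously: a naive per-query union bound loses too much, because a $\gamma_k$-fraction of adversarially placed wrong answers per round could in principle wipe out the tracing signal. The resolution is to work with the \emph{aggregate} score $Z_t$ per round rather than with per-query scores, and to exploit that $k\ge 1/\epsilon^{2.01}$ is just above the threshold where the $O(\sqrt{k})$ within-round concentration slack is swamped by the $\Omega(\epsilon^2 k)$ per-round signal; the $0.01$ in the exponent is exactly the slack needed to absorb polylogarithmic factors and a constant $\gamma_k$. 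The bad-round fraction $\gamma_T$ then costs only a constant via a Markov-type averaging over rounds. I expect this robust-aggregation step, rather than the fingerprinting algebra itself, to be the main technical work.
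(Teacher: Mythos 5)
Your construction cannot yield the theorem. In your instance every query the analyst will ever ask is one of the $d=kT$ coordinate queries fixed before the interaction begins; only the order in which blocks are asked is adaptive. Against such an analyst the trivial oracle that outputs empirical means is $(\epsilon,0,0,\delta)$-accurate with $m=O(\log(kT)/\epsilon^2)$ samples, by uniform convergence over a fixed family of $kT$ queries, so no bound of the form $m=\Omega(\sqrt{T}/\epsilon^2)$ can hold for this instance; in particular your claimed completeness bound $\E[\sum_t\sum_{x\in S}Z_t(x)]\ge\Omega(\epsilon^2 kT)$ must fail against that oracle. The underlying conceptual error is the claim that within-round non-adaptivity ``plays the same role'' as post-hoc generalization: it is the opposite. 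The fingerprinting score is only large when answers are correlated with the \emph{sample} beyond what population accuracy forces, and this correlation is obtained in the literature either by assuming post-hoc generalization (as in \cref{thm:stemmer}) or by letting the analyst adaptively \emph{define} queries from previous answers (the interactive attack of \citet{steinke2015interactive}); pre-fixing the queries helps the oracle rather than constraining it. Indeed the paper stresses that no unconditional $\Omega(\sqrt{T}/\epsilon^2)$ bound is known even for ordinary statistical queries, which is exactly why \cref{thm:kT} needs a different mechanism. There is also an arithmetic gap at the end: even granting your completeness and soundness claims, $m\ge\Omega\bigl(\epsilon\sqrt{kT/\log m}\bigr)$ with $k=1/\epsilon^{2.01}$ gives only $m=\tilde\Omega\bigl(\sqrt{T}/\epsilon^{0.005}\bigr)$, far from $\sqrt{T}/\epsilon^2$; no choice of $k=\mathrm{poly}(1/\epsilon)$ of this size makes $\epsilon\sqrt{kT}$ reach $\sqrt{T}/\epsilon^2$.

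For comparison, the paper extracts the $1/\epsilon^2$ factor not from the tracing score but from sample splitting: the distribution is a uniform mixture over $d=\Theta(1/\epsilon^2)$ sub-populations, so each sub-population receives only about $m\epsilon^2$ samples. The batch of $k\ge 1/\epsilon^{2.01}$ non-adaptive queries per round is used, via De's reconstruction attack (\cref{thm:reconstruction}, following \citet{bun2018fingerprinting}), to decode constant-accuracy answers $a_{t,i}$ for each of $d$ independent copies of the randomized Steinke--Ullman adversary (\cref{thm:interactive}), whose queries \emph{are} genuinely adaptive in their definition. Since that adversary defeats any oracle holding $m_i\approx m\epsilon^2$ samples once $T\ge T(m_i,1/4)=O(m^2\epsilon^4)$ rounds are played, accuracy forces $T=O(m^2\epsilon^4)$, i.e.\ $m=\Omega(\sqrt{T}/\epsilon^2)$. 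If you want to salvage a direct fingerprinting proof you would have to either import the post-hoc generalization hypothesis (which \cref{thm:kT} does not assume) or reproduce this sub-population-plus-reconstruction structure.
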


Before we begin with the proof, we provide several preliminary results that we build upon.

\subsubsection{Overview and technical preliminaries}
The proof of \cref{thm:kT} relies on a technical idea that appears in \cite{bun2018fingerprinting}. \citet{bun2018fingerprinting} starts by considering two constructions in the context of privacy. The first, demonstrates a sample complexity lower bound of $\Omega(\sqrt{T})$ for $T$ private queries, and a second construction, a reconstruction attack, that allows a certain reconstruction of the data unless the sample size is order of $\Omega(1/\epsilon^2)$ for $\epsilon$-accurate answers. Then, they provide a new construction that consolidates these two bounds into one construction that operates on a certain product space of the two domains.
Here we do something similar only we replace the privacy attack with an adaptive data analysis attacks that operates on i.i.d samples (which is not necessary when privacy is considered). The consolidation is a little bit different as we must consider a dataset that is generated by sampling i.i.d examples (as opposed to worst-case dataset in the case of privacy)                          .

In more detail, the proof of \cref{thm:kT} relies on two types of attacks that were introduced by \citet{steinke2015interactive, de2012lower}. Our first type of attack is a reconstruction attack, and we follow the definition of \citet{bun2018fingerprinting}:
\begin{definition}[Reconstruction Attack]
For a dataset $\S=\{x_1,\ldots,x_m\}$, we will say that $\S$ enables an $\epsilon'$-reconstruction attack from $(\epsilon,\gamma)$-accurate answers to the family of statistical queries $\Q$ if: There exists a function
\[\B : \reals^{|Q|}\to [0,1]^m,\]
such that for every vector $v\in [0,1]^m$ and every answer sequence $a=(a_q)_{q\in \Q}\in [0,1]^\Q$: If for at least $1-\gamma$ fraction of the queries $q\in \Q$ holds:
\[\left|a_q-\frac{1}{m}\sum_{i=1}^m q(x_i)v(i)\right| < \epsilon,\]
then for $\b=\B(a)$:
\[ \frac{1}{m}\sum_{i=1}^m |b(i)-v(i)| < \epsilon'.\]
\end{definition}
The following result is due to \citet{de2012lower}, we state it as in \cite{bun2018fingerprinting} for the special case of considering $1$-way marginals\footnote{Note that in \cite{bun2018fingerprinting} $k$ denotes the $k$-way marginal query class which we fix to be the $1$-way marginal, and the $k$ in our statement is denoted by $d$ in \cite{bun2018fingerprinting}} :
\begin{theorem}\label{thm:reconstruction}
Let $k\ge 1/\epsilon^{2.01}$, and assume $\epsilon$ is sufficiently small. There exists a constant $\gamma_0$ (independent of $\epsilon$ and $k$) such that for every $\epsilon'$, there exists a dataset $\S= (\{0,1\}^k)^m$ with $m=\Omega_{\epsilon'}(1/\epsilon^2)$ such that $\S$ enables an $\epsilon'$-reconstruction attack from $(\epsilon,\gamma_0)$-accurate answers to a family of queries $\Q$ of size $k$.
\end{theorem}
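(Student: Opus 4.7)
The plan is to instantiate a De--Dinur--Nissim style random linear reconstruction attack. Take the dataset $\S=\{x_1,\ldots,x_m\}\subseteq\{0,1\}^k$ to be random: every coordinate of every $x_i$ is independent uniform in $\{0,1\}$, and set $m:=C(\epsilon',\gamma_0)/\epsilon^2$ for a sufficiently large constant $C$. Take the query family $\Q=\{q_1,\ldots,q_k\}$ to be the coordinate projections $q_j(x)=x_j$, so the weighted answer for any weighting $v\in[0,1]^m$ is $\frac{1}{m}\sum_i (x_i)_j v_i = \frac{1}{m}(Av)_j$, where $A\in\{0,1\}^{k\times m}$ is the random design matrix with $A_{ji}=(x_i)_j$. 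The reconstruction $\B$ is the obvious LP-style one: given an answer vector $a\in\reals^k$, output any $\b\in[0,1]^m$ that satisfies $|a_j-\frac{1}{m}(A\b)_j|<\epsilon$ for at least a $(1-\gamma_0)$-fraction of $j\in[k]$. (If $a$ is $(\epsilon,\gamma_0)$-accurate for some true $v$, then $v$ itself is a feasible point, so $\B$ is non-empty.)

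The heart of the proof is the following distinguishing / well-spreadness property of $A$, which I would prove as the key lemma: with probability $1-o(1)$ over $\S$, for \emph{every} $u\in[-1,1]^m$ with $\|u\|_1/m\ge \epsilon'$, at most $2\gamma_0 k$ of the indices $j\in[k]$ satisfy $\bigl|\tfrac{1}{m}(Au)_j\bigr|<2\epsilon$. Given this lemma, the theorem is immediate: if $\b=\B(a)$ and $a$ is $(\epsilon,\gamma_0)$-accurate for $v$, then for at least a $(1-2\gamma_0)$-fraction of $j$ both constraints are satisfied, forcing $\bigl|\tfrac{1}{m}(A(v-\b))_j\bigr|<2\epsilon$ on that fraction; by the lemma this rules out $\|v-\b\|_1/m\ge\epsilon'$, giving the desired $\epsilon'$-reconstruction.

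The lemma itself is an anti-concentration plus net argument. For any fixed $u$ with $\|u\|_1/m\ge\epsilon'$, Cauchy--Schwarz yields $\|u\|_2^2\ge \|u\|_1^2/m\ge(\epsilon')^2 m$, so the random sum $\tfrac{1}{m}(Au)_j$ has variance $\tfrac{1}{4m^2}\|u\|_2^2\ge (\epsilon')^2/(4m)$, i.e.\ standard deviation $\Omega(\epsilon'/\sqrt{m})$. Choosing the constant $C(\epsilon',\gamma_0)$ so that $m\le(\epsilon'\gamma_0/100\epsilon)^2$ makes the standard deviation exceed $2\epsilon$ by a factor $\Omega(1/\gamma_0)$, and a Berry--Esseen / Littlewood--Offord bound then shows $\Pr\bigl[|\tfrac{1}{m}(Au)_j|<2\epsilon\bigr]\le \gamma_0$. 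A Chernoff bound over $j\in[k]$ gives tail $\exp(-\Omega(\gamma_0 k))$ that the bad fraction exceeds $2\gamma_0$. To pass from fixed $u$ to all $u$, I take an $\epsilon'/10$-net of $[-1,1]^m$ in the $\ell_1/m$ metric of size $\le(30/\epsilon')^m$, and also union-bound over the $\binom{k}{\gamma_0 k}\le (e/\gamma_0)^{\gamma_0 k}$ possible "corrupted" subsets of queries. The union bound closes as long as $\gamma_0^2 k \gtrsim m\log(1/\epsilon') + \gamma_0 k\log(1/\gamma_0)$, which the hypothesis $k\ge 1/\epsilon^{2.01}$ comfortably ensures since $m=\Theta_{\epsilon',\gamma_0}(1/\epsilon^2)$.

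The main obstacle is precisely this anti-concentration step for the random linear form $\tfrac{1}{m}(Au)_j$: the rows of $A$ are $\{0,1\}$-valued rather than centered, so the sum has a nontrivial mean $\tfrac{1}{2m}\sum_i u_i$ that may itself sit inside $[-2\epsilon,2\epsilon]$ and would spuriously help the ``bad'' event. The standard way to fix this is to work with the centered matrix $\tilde A = A-\tfrac{1}{2}J$ (which has i.i.d.\ $\pm\tfrac12$ entries), absorb the rank-one shift into the feasibility region of $\b$ (equivalently, subtract $\tfrac{1}{2m}\sum_i b_i$ from every constraint), and then run the Berry--Esseen / net argument on the centered form, whose distribution is symmetric around $0$ and for which the anti-concentration bound is clean. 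Once this is handled carefully, all other ingredients are routine concentration.
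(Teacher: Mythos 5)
The paper does not actually prove this statement: it is imported from \citet{de2012lower} as restated in \citet{bun2018fingerprinting}, so there is no in-paper argument to compare against and your proof must stand on its own. The skeleton you propose --- a random $\{0,1\}$ design matrix, coordinate-projection queries, LP-style decoding that outputs any feasible $\b$, and a key lemma of the form ``for every $u$ with $\|u\|_1/m\ge\epsilon'$, few rows of $Au$ are $O(\epsilon)$-small'' proved by anti-concentration plus a net --- is exactly the standard De/Dinur--Nissim route and is the right one. The centering worry you raise at the end is in fact a non-issue: Berry--Esseen bounds the probability that the centered form lands in \emph{any} interval of length $4\epsilon$ by $O(\epsilon/\sigma)$ plus the Berry--Esseen error, wherever that interval sits, so no modification of the feasibility region is needed.

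The one genuine gap is the net granularity. An $\epsilon'/10$-net of $[-1,1]^m$ in the $\ell_1/m$ metric does not let you transfer the lemma from net points to arbitrary $u$: replacing $u$ by its net representative $u^*$ can change each linear form $\tfrac1m(Au)_j$ by as much as $\tfrac1m\|u-u^*\|_1\le\epsilon'/10$, which dwarfs the window width $2\epsilon$ you are trying to control (recall $\epsilon\ll\epsilon'$ in the intended regime, where $\epsilon'$ is a fixed constant and $\epsilon\to0$). The net must be taken at scale $O(\epsilon)$ (equivalently, discretize candidates at resolution $\epsilon\gamma_0$), which inflates the logarithm of the net size from $m\log(1/\epsilon')=O_{\epsilon'}(\epsilon^{-2})$ to $\Theta(m\log(1/\epsilon))=\Theta_{\epsilon'}(\epsilon^{-2}\log(1/\epsilon))$. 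The union bound then requires $\gamma_0 k\gtrsim\epsilon^{-2}\log(1/\epsilon)$, which is precisely why the hypothesis reads $k\ge1/\epsilon^{2.01}$ rather than $k\gtrsim1/\epsilon^{2}$ --- a distinction your accounting (which would already be satisfied at $k=\Theta(m)$) erases. With the finer net the argument closes for $\epsilon$ sufficiently small, so the fix is local, but as written the ``for every $u$'' quantifier in your key lemma is not justified.
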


The second attack that we rely on provides an information theoretic lower bound of $\Omega(\sqrt{T})$ to answer adaptive statistical queries:
\begin{theorem}[Thm 3.10 \cite{steinke2015interactive}]\label{thm:interactive0}
For all $\gamma<1/2$, there is a function $T(m,\gamma)\in O\left(m^2/{(1/2-\gamma)^4}\right)$, such that there is no oracle $\O$ that is $(0.99,\gamma,1/2)$-accurate  for $T(m,\gamma)$ adaptively chosen queries, given $m$ samples in $\{0,1\}^d$, where $d\ge T(m,\gamma)$. 
\end{theorem}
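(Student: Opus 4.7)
The plan is to attack the oracle with a randomized analyst that performs an adaptive fingerprinting / reconstruction attack, in the spirit of Hardt--Ullman. I would take the data distribution $D$ to be uniform on $\{0,1\}^d$, so every coordinate function has population mean $1/2$. The analyst then adaptively submits normalized linear queries $q_t(x) = \langle v_t, x - \tfrac{1}{2}\mathbf{1}\rangle$ for carefully chosen $v_t \in \{-1,+1\}^d$ (appropriately scaled to keep $q_t$ bounded in $[-1,1]$). After each answer $a_t$, the analyst maintains for every $y \in \{0,1\}^d$ a running ``score'' $s(y) = \sum_t a_t \cdot q_t(y)$ that is intended to be large when $y$ lies in the secret sample $S$ and small otherwise.

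The key step is to choose the next query direction $v_{t+1}$ as a function of the previously observed scores: pick $v_{t+1}$ whose sign pattern aligns with the direction in which the oracle has so far leaked the most information about $S$. If the oracle is $(0.99, \gamma, 1/2)$-accurate, then on at least a $(1-\gamma)$-fraction of rounds $a_t$ must track the empirical mean $\frac{1}{m}\sum_{i \in [m]} q_t(x_i)$ to within $0.99$. Combined with a concentration inequality and the adaptive choice of $v_t$, this accumulates an $\Omega\!\left(T(1/2-\gamma)^2 / m\right)$ signal on the true sample while fresh points from $\{0,1\}^d$ retain only $O(\sqrt{T})$ score in absolute value.

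Once $T \geq C\, m^2 / (1/2-\gamma)^4$ and $d \geq T$, the two score distributions separate by a detectable margin with probability at least $1/2$, so thresholding the scores recovers a constant fraction of $S$ from the transcript. Because $D$ is uniform on $\{0,1\}^d$ and $d \geq T$, the sample carries $\Theta(md)$ bits of entropy that cannot all be communicated through $T$ real-valued answers, which yields a contradiction: no such $\O$ can exist. Running through the bookkeeping and turning this into a transcript-vs-sample mutual-information argument then gives the stated $T(m,\gamma)$.

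The main obstacle I anticipate is getting the exponent $4$ on $(1/2-\gamma)$ rather than $2$. A single non-adaptive fingerprinting step only earns a factor of $(1/2-\gamma)^{-2}$ (from discarding the $\gamma T$ potentially corrupted rounds while still requiring a score gap bounded away from the noise level); the extra factor comes from a nested, adaptive boosting step -- choosing $v_t$ from a randomized rounding of a running moment estimator rather than from an oblivious sign pattern -- and carefully tracking how this nested adaptivity interacts with a fully adaptive oracle that can itself adjust the locations of its $\gamma T$ errors is the delicate part of the argument.
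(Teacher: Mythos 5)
Your attack does not engage with what ``accurate'' means in this model, and that gap is fatal. Accuracy in \cref{def:oacc} is measured against the \emph{population} value $q_t(D)$, not against the empirical mean $\frac{1}{m}\sum_i q_t(x_i)$. You fix $D$ to be the uniform distribution on $\{0,1\}^d$, which is known in advance; but then the oracle that ignores the sample entirely and outputs $q_t(D)$ (for your centered linear queries, simply $0$) is perfectly accurate on every round, leaks nothing, and your score-accumulation argument produces no contradiction. Your closing step is likewise a non sequitur: the oracle is under no obligation to ``communicate'' the $\Theta(md)$ bits of the sample through its answers, so there is no entropy bottleneck to exploit. Any correct proof must place \emph{secret randomness in the distribution itself} (e.g.\ random per-column biases, or one-time-pad--encrypted secrets), so that answering accurately about the population forces the oracle to depend on, and thereby expose, the sample. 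This is exactly the structure of the Steinke--Ullman interactive fingerprinting construction that the theorem is quoted from, and it is the structure the present paper relies on when it sketches the dual restatement (\cref{thm:interactive}): the analyst randomizes over one-time-pad secret keys and asks queries parameterized by vectors in $\{-1,0,1\}^N$, so that the correct answers are $\pm1$-valued and unpredictable without the sample.

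A second, independent problem is the error tolerance $0.99$ on queries bounded in $[-1,1]$: an answer is wrong only if it misses by nearly the full range. Your normalized inner-product statistics $\langle v_t, x-\tfrac12\mathbf{1}\rangle$ (scaled into $[-1,1]$) have population and empirical values of magnitude $o(1)$, so the constant answer $0$ is $0.99$-accurate on every round no matter how cleverly you adapt $v_{t+1}$; the $O(\sqrt{T})$ vs.\ $\Omega(T(1/2-\gamma)^2/m)$ score-gap bookkeeping never comes into play. Defeating a $0.99$-accurate oracle requires queries whose true answers sit near $\pm1$ yet are hidden from the oracle --- again what the encrypted-secret/fingerprinting queries accomplish and what a reconstruction-style linear attack cannot. (Reconstruction attacks of the kind you describe are the right tool for the small-$\epsilon$, $\Omega(\sqrt{T}/\epsilon^2)$ bounds used elsewhere in this paper, e.g.\ \cref{thm:reconstruction}, but not for this constant-error, $T=O(m^2)$ statement.) So the difficulty is not the exponent $4$ on $(1/2-\gamma)$; it is that the attack, as designed, never forces the oracle to use the data at all.
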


We will require a dual restatement of \cref{thm:interactive0}, which essentially follows the same proof together with standard minmax theorem:
\begin{theorem}\label{thm:interactive}
There exists a randomized analyst $A$ such that for any oracle $\O$ that interacts with $A$ for $T(m,\gamma)$ rounds having $m$ samples, then with probability at least $1/2$ for at least $\gamma T$ of the rounds: 
\[ |a_t - q_t(D)|>0.01.\]
\end{theorem}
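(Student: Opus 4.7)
The plan is to derive \cref{thm:interactive} from \cref{thm:interactive0} by a standard minimax (Yao-type) argument that swaps the order of the quantifiers. Unpacking \cref{def:oacc}, \cref{thm:interactive0} says that for every oracle $\O$ there exists an analyst $A_\O$ (including its choice of distribution $D$) such that with probability at least $1/2$ more than $\gamma T$ of the queries satisfy $|a_t - q_t(D)| > 0.99$, which is in particular stronger than $> 0.01$. Our goal is to exhibit a single randomized analyst $A^\star$ that forces this bad event against every oracle.

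I would set up the analyst--oracle interaction as a two-player zero-sum game. The analyst's pure strategies are pairs $(D,\pi_A)$ consisting of a distribution on $\{0,1\}^d$ and a deterministic adaptive policy mapping each query/answer history to the next query; the oracle's pure strategies are deterministic adaptive policies mapping (sample, history) to the next answer. The payoff to the analyst is the indicator of the bad event $\bigl\{\#\{t : |a_t - q_t(D)| > 0.01\} > \gamma T\bigr\}$, with expectation taken over $S \sim D^m$ and over the players' coins. \cref{thm:interactive0} asserts $\min_{\O}\max_{A}\,\E[\textrm{payoff}] \ge 1/2$. Applying von Neumann's minimax theorem yields $\max_{A\textrm{ mixed}}\min_{\O}\,\E[\textrm{payoff}] \ge 1/2$, and the optimizing mixed strategy is the desired randomized analyst $A^\star$.

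The main technicality is justifying the minimax exchange in what is a priori an infinite game. I would finitize both strategy spaces by (i) discretizing the oracle's answers to an $\epsilon_0$-grid of $[-1,1]$ for some $\epsilon_0 \ll 0.01$, and (ii) restricting the analyst to the finite subclass of queries and product distributions on $\{0,1\}^d$ used in the construction underlying \cref{thm:interactive0} from \cite{steinke2015interactive}. The discretization costs at most $\epsilon_0$ in accuracy, which is absorbed by the wide gap between the $0.99$ failure threshold and the $0.01$ threshold appearing in the conclusion; since the sample space $\{0,1\}^d$ is already finite, both strategy spaces become finite simplices and the classical minimax theorem applies. Randomized oracles are handled automatically by conditioning on the oracle's internal randomness. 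The principal obstacle is thus bookkeeping: verifying that the Steinke--Ullman construction lies inside (or can be well-approximated by) the chosen finite class without degrading the constants $0.99$, $\gamma$, and $1/2$.
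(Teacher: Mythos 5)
Your proposal matches the paper's own proof: both derive the statement from \cref{thm:interactive0} by a minimax duality argument, made legitimate by finitizing the game through the structure of the Steinke--Ullman construction (finitely many key-based distributions, queries parameterized by vectors in a finite set, and an analyst whose adaptive choices depend only on the signs/rounding of the answers). The only cosmetic difference is that you also discretize the oracle's answer space to invoke the classical finite minimax theorem, whereas the paper finitizes only the analyst side and then appeals to standard duality; both work given the slack between the $0.99$ threshold of \cref{thm:interactive0} and the $0.01$ threshold in the conclusion.
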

\begin{proof}[Sketch]
The proof is essentialy the proof of \cref{thm:interactive0} as depicted by \citet{steinke2015interactive}. We only need to argue that in the construction of \citet{steinke2015interactive} the advarsarial analysts that are being constructed are from a finite set and then use standard minmax duality.
To see that the analysts in the original proof are supported on a finite set, first observe that the analyst chooses (randomly) a uniform distribution over a sequence of pairs $(1,v_1),(2,v_2),\ldots, (N,v_N)$ where $N=T(m,\gamma)$ and each $v_i$ depicts a secret key (where for the information-theoretic lower bound we choose a one-time pad encryption and then $v_i\in \{\pm 1\}^N$). Hence the set of feasible distributions is of size $2^N$, and $N=O(T(m,\gamma))$.
Next, we note that at each iteration, the analyst rounds the answer for $\O$, $a_t$ and chooses as a query, $q_{t}$ which is parameterized by a vector in $\{-1,0,1\}^N$. Hence, the query at round $t$ depends on $t$ vectors in $\{-1,0,1\}^N$ and $\{\mathrm{sgn}(a_1),\ldots,\mathrm{sgn}(a_t)\}$ hence overall there is a finite set of states to which the analyst can transition at each iteration, so overall there is only a finite amount of analysts on which the distribution is supported.
\end{proof}

\subsection{Proof of \cref{thm:kT}}\label{sec:convexreduction} 

Let $k\ge 1/\epsilon^{2.01}$, and set $\Q_{\epsilon}$ be a set of at most $k$ queries over a dataset $\S_\epsilon$ and $\gamma_0$ a parameter that enables an $\epsilon'$-reconstruction attack from $(\epsilon,\gamma_0)$-accurate answers to $\Q_\epsilon$ as guaranteed in \cref{thm:reconstruction}. Without loss of generality we assume $\gamma_0<1/2$, and $\epsilon'$ is chosen such that:
\[\epsilon' <\frac{0.01}{3\cdot 2^6}.\]
Now we let $d= |S_\epsilon|=O(1/\epsilon^2)$. Without loss of generality and for simplicity of notations we assume $S_{\epsilon}=[d]=\{1,\ldots, d\}$.

Suppose $\Q$ is a family of queries, and assume we have $d$ analysts, in the standard model (i.e. each analyst asks a single question), $(A_{1},\ldots,A_{d})$. We define an analyst that asks $k$ queries $\A (A_1,\ldots,A_d)$ as follows: 

First, when the analysts choose distributions $D_{1},\ldots, D_{d}$ over $\X$, the analyst $\A$ defines a distribution $D$ over $\S_{\epsilon}\times \X$ that chooses first randomly and uniformly $i\in [d]$ and returns $(i,x)$ where $x\sim D_i$. The oracle, in turn, observes i.i.d samples from the given product distribution.

The interplay with the oracle proceeds as follows:  At each iteration $t$, we assume by induction that each analyst, $A_1,\ldots,A_d$, provides a query $q_{t,1},\ldots, q_{t,d}$. The analyst $\A(A_1,\ldots, A_d)$ constructs for each query $q\in \Q_{\epsilon}$ the query
\[q_t'((i,x))=q(i)q_{t,i}(x),\]
and asks these $k$ non-adaptive queries.

Then, given the answer vector $\{\a_{q_t'}\}$, we provide analyst $A_i$ with the answer $a_{t,i}$, where
\[\a_t= \B(\a_{q_t'}),\]
and $\B$ defines the reconstruction attack in \cref{thm:reconstruction}. The analysts then provide the queries $q_{t+1,1},\ldots, q_{{t+1},d}$ and the analyst $\A(A_1,\ldots, A_d)$ continues to the next round until round T.
Our analyst then depends on the $d$ analysts, We choose them to be $d$ i.i.d copies of the analyst in \cref{thm:interactive} and we let $\bar A$ be the analyst induced by such $A_1,\ldots, A_d$. 

Notice that when we fix $A_1,\ldots, A_{i-1},A_{i+1},\ldots, A_d$, that are provided to $\A(A_1,\ldots, 
A_d)$ we induce an oracle, that we denote by $\O_i$ that interacts with analyst $A_i$. 
In more detail, we consider a randomized oracle $\O_i$ that operates as follows:

At the beginning of the interaction, before the first round, $\O_i$ draw a uniform sample $\{s_1,\ldots, s_m\}$. For each sample $s_j\ne i$ the oracle also draws a sample $(x\sim D_{i})$. Then, given $m_i$ samples $\{x_1,\ldots, x_{m_i}\}$ from $D_{i}$ where $m_i$ is the number of times $i$ was drawn, the oracle adds to the sample the sample points $\{(i,x_j)\}_{j=1}^{m_i}$. Notice that this sample is drawn exactly according to the process depicted above where $(i,x)$ is drawn such that $i$ is uniform and $x\sim D_i$. The interaction with $A_i$ along the rounds is continued where at each round $\bar A$ transmit  the question, $\O$ answers, and $\bar A$ transmit the answer back, as described above.

In this interaction the number of samples is random, but notice that if $T \ge T(m_i,1/4)$, where
\[ T(m,1/4)= O(m^2),\]
is defined in \cref{thm:interactive}, 
then with probability at least $1/2$ for $T/4$ of the rounds, by \cref{thm:interactive}
\[ \|a_{t,i}-q_{t,i}(D_i)\| > 0.01.\]
 This also entails that for every $i$:

\begin{equation}\label{eq:atiqti} \E_{A_i,Q_i}\left(\frac{1}{T}\sum_{t=1}^T \left|a_{t,i}-q_{t,i}(D_i)\right|\right) > \frac{0.01}{2^3} \mathbb{P}(T\ge T(m_i,1/4)).\end{equation}

Now assume $\O$ is $(\epsilon,\epsilon',\gamma_0,\epsilon')$-accurate for $T=T(4m/3d,1/4)$ rounds, and consider $\A$ as defined above.
Then with probability $1-\epsilon'$: for $(1-\epsilon')$-fractions of the rounds, for $(1-\gamma_0)$-fraction of the queries $q_t'$:

\[
\left|a_{q_t'}-\frac{1}{k}\sum_{i=1}^k q(i)q_{t,i}(D_i)\right|=\left|a_{q'_t}-q_t'(D)\right|<\epsilon.\]
Which entails by reconstruction attack, for the same fraction of rounds:
\[ \frac{1}{k}  \sum_{i=1}^k |a_{t,i}-q_{t,i}(D_i)|\le \epsilon'.\]
Taken together we have 
\begin{equation}\label{eq:ub} \E\left[\frac{1}{Tk}  \sum_{i=1}^k |a_{t,i}-q_{t,i}(D_i)|\right]\le 3\epsilon'\le \frac{0.01}{2^6}.\end{equation}

On the other hand, notice that for any analyst, with probability $1/4$, $\O_i$ is provided with less than $\frac{4m}{3d}$  samples from the distribution $D_i$ 

So by choice $T\ge T(4m/3d,1/4)=O(m^2\epsilon^4)$, 
and by \cref{eq:atiqti}, we have:

\[\E_{A,Q}\left( \frac{1}{Tk}\sum_{i=1}^T\sum_{i=1}^k |a_{t,i}-q_{t,i}|\right) =\frac{1}{k} \sum_{i=1}^k \E_{A_i,Q_i}\left(\frac{1}{T}\sum_{t=1}^T|a_{t,i}-q_{t,i}|\right)> \frac{0.01}{2^5}.\]

contradicting \cref{eq:ub}.
\subsubsection{Proof of \cref{thm:main1}}\label{sec:prf:main1}

We now proceed with the formal proof of \cref{thm:main1}. Given a family of queries $|\Q|\le d$ index the coordinates of $\reals^d$ by the elements of $\Q$. Namely, we think of $\reals^d$ as $\reals^\Q$ where each vector $w\in \reals^Q$ is thought of as a function $w:\Q\to \reals$.

We define a convex, over the parameter $w$, function in $\reals^\Q$:
\begin{equation}\label{eq:poc} f(w,x)= \sum_{q\in \Q} \frac{q(x)+1}{4} w^2(q).\end{equation}
Note that, since $|q(x)|\le 1$, the above function is always convex and $1$-Lipschitz for any choice of queries and $x$.
Let $\O_F$ be a first order $(\epsilon,\gamma_T,\delta)$-accurate oracle, and let us consider the setting of an analyst that requires $k$ non adaptive queries for $T$ rounds. 
Let $A$ be an analyst that asks $k$ non adaptive, $T$ adaptive queries, and consider an oracle $\O$ that performs as follows: Given queries $q_{t,1},\ldots, q_{t,k}$, the oracle $\O$ transmit to the oracle $\O_F$ the point:
 \[w_t= \frac{1}{\sqrt{k}}\sum_{i=1}^k e_{q_{t,i}},\]
 where $e_q$ is the vector in $\reals^Q$ that is $e_q(q)=1$, and $e_q(q')=0$ when $q'\ne q$. In turn, the oracle receives the answer vector $\O_F(w_t) =g_t$ and returns the answers $a_{t,i}= 2\sqrt{k} g_t(q_{t,i})-1$
 
Now suppose that with probability $(1-\delta)$ for $\gamma_T$ fraction of the rounds:
\[\|g_t-\nabla F(w_t)\|\le \epsilon,\]
then:
\[
\frac{1}{k}\sum \left(\frac{a_{t,i}-q_{t,i}(D)}{2}\right)^2
=
\sum \left(\frac{a_{t,i}+1}{2\sqrt{k}} - \frac{q_{t,i}(D)+1}{2\sqrt{k}} \right)^2
\le  \|g_t -\nabla F(w_t)\|^2\le \epsilon^2.\]
Then by Markov's inequality for any $\gamma_k$, for $\gamma_k$ fraction of the queries we have:
\[ |a_{t,i}-q_{t,i}(\D)|\le \frac{2\epsilon}{\sqrt{\gamma_k}}.\]
Which means $\O$ is an $(\frac{2\epsilon}{\sqrt{\gamma_k}},\gamma_T,\gamma_k,\delta)$-accurate oracle that answers $k$ non adaptive $T$ adaptive queries. By \cref{thm:kT}, for small enough $\epsilon$, with correct choice of $\gamma_k$, $\gamma_T$ and $\delta$:
\[m=\Omega\left(\sqrt{T}/\epsilon^2\right).\]

\section{Gradient Descent}
In this section we set out to prove \cref{thm:main2}. In contrast to previous result, here we fix the analyst and assume that it performs predefined update steps. This puts several complications into the lower bound as we cannot actively make it ``adversarial", at least not in the standard way. Nevertheless our construction builds on a similar idea as the proof before. The idea here is to think of the function as a ``state" machine, where each coordinate represents a query that may be asked. The analyst, given answers to the queries, moves to the next query. The complication though, is that here the dynamic is predefined and we need to design our function carefully so that GD will induce the correct transition between states. 

The idea is captured in what is our main technical workhorse which is the notion of a \emph{GD wrapper}, which we build in \cref{sec:construction}. GD wrappers will be used to provide a reduction from a special class of analysts termed \emph{Boolean analysts}, which are depicted in \cref{sec:overview}. Then we use a simple reduction from general analysts (see \cref{lem:boolean}) to obtain a reduction from general analysts to our setting.

We begin with a brief overview of the construction. After that, in \cref{sec:convex_wrapper} we depict the technical notion of GD wrapper. We then explain, in \cref{sec:prfmain2}, how to deduce \cref{thm:main2} from the existence of a GD wrapper. Finally, in \cref{sec:construction} we provide a construction of a GD wrapper, which concludes the proof.

\subsection{Brief overview}\label{sec:overview}
As discussed, the heart of our construction is the notion of a \emph{GD wrapper}. The idea is quite straightforward. Given an analyst $A$ we want to construct a convex function $f_A$ that is convex and such that the trajectory of the function provides the answers to our question.
We've done something similar in the construction of \cref{eq:poc}. There too, we constructed a convex function that the gradient of $w_t$ at a certain coordinate provides an answer to a given query. The issue though is that there we could ask to query any coordinate we wanted. Here we need to make sure that the dynamic of GD moves us from query $q_t$ to query $q_{t+1}$. 

Thus, the first requirement that we want from our function $f_A$ is that by looking at the outputs: $\O(w_1),\ldots, \O(w_{t-1})$ we can identify the answer to query $q_t$. For one non-adaptive query this is straight forward. Indeed, consider the linear function
\[f_q(w,x) = q(x)\cdot w(1).\]
Then, the gradient $\nabla F_q(w)= \E_{x\sim D}[q(x)]e_1$. So we can identify the answer to the first query. 

\paragraph{$2$ Queries:} As a next step, let us construct a convex function where GD works as follows: At the first step the function will provide answer to query $q_1$, and if $q_1(x)>b_1$ for some threshold $b_1$, then the function transitions to a state $w_2$ that identifies the answer to a query $q^{+}$ and else moves to a state that identifies $q^{-}$ (to simplify, we will assume that the answer is never $q = b_1$). This is still far from a general strategy of an analyst, but at the end of this overview we will discuss how we can reduce the general problem to a problem of a similar form.

Also, for the exposition we don't want to consider the oracle's strategy, hence assume that at each iteration the oracle returns the \emph{true gradient} and we will show how the trajectory simulates the adaptive query interaction:

\[f_{q,q^+,q^-} = \max\left\{ w(1)+ \eta - \frac{1-q(x)}{3}w(2) - \frac{1+q(x)}{3}w(3), w(2) +q^+(x)w(4), w(3) + q^{-}(x)w(5)\right\}.\]
Our function is described as the maximum of three linear functions hence it is convex. Now let us follow the trajectory for the first two steps.
At the first step, note that the first term maximizes the term. Recall that the gradient of a function $f=\max\{g_1,\ldots,g_k\}$, is given by $\nabla f=\argmax \nabla_{i\in [k]} g_i$ hence we have that for $w_1=0$, for every $x$:
\[ \nabla f_{q,q^+,q^-}(0,x) = e_1 - \frac{(1-q(x)}{3}e_2 - \frac{1+q(x)}{3}e_3.\] hence:
\[w_2= w_1-\eta \nabla \E_{x\sim D}[f_{q,q^+,q^-}(0,x)] = -\eta e_1 + \frac{(1-q(D))}{3}\eta e_2 + \frac{1+q(D)}{3}\eta e_3.\]

Now, note that the first term is no longer maximized by $w_2$ for any $x$, as we moved against the gradient and now it is smaller. On the other hand, if $q(D)<0$ the second term is maximized, and else the last. Assume the first: then
\[ \E_{x\sim D}\nabla f(w_2,x) = e_2 + q^{+}(D) e_4,\]
Note that the gradient at $w_2$ tells us excatly whether $q(D)>0$ or $q(D)<0$. In particular, if $\O(w_2)_2>1/2$, then we know the $q(D)>0$. Any oracle that returns an approximate answer will also identify the answer. Using a recursive process, along these lines, we then can construct a convex function that moves from one query to another using gradient descent.

In general, the state of an analyst does not depend, necessarily, on some threshold value as in our case above. However, as the next reduction shows, if we are willing to suffer a $\log 1/\epsilon$ factor increase in the number of queries, we can turn a general analyst to an analyst whose decision indeed depend on some threshold as depicted here:

\paragraph{Boolean Analysts}
We will call an analyst \emph{Boolean} if it provides to the oracle a query $q_i:\X\to[-1,1]$
and its state at time $t$ depends only on $\{\mathrm{sgn}(a_1),\mathrm{sgn}(a_2),\ldots, \mathrm{sgn}(a_{t-1})\}$. A more general setup could allow at each iteration a query $q_i$ and a threshold $b_i$ and the state may depend only on $\mathrm{sgn}(a_{1}-b_1),\ldots, \mathrm{sgn}(a_{t-1}-b_{t-1})$. However, up to rescaling it can be seen that the two types of analysts are equivalent.

For such a boolean analyst, we define an oracle to be $(\epsilon,\delta)$-accurate for $T$ adaptive queries against a Boolean analyst, if given $m$ samples for every Boolean analyst $A$ with probability at least $(1-\delta)$ for all $t\in [T]$ if $a_t=1$ then 
\[ q_t(D) > -\epsilon,\]
and if $a_t=-1$
\[ q_t(D) < \epsilon.\]

Similarly, an oracle $\O$ is $(\epsilon,\delta)$-post hoc generalizing for $T$ adaptive queries against a Boolean analyst, if given $m$ samples for every Boolean analyst $A$ with probability at least $(1-\delta)$ for all $t\in [T]$ if $q(D)>0$ then 
\[ \frac{1}{m}\sum q_t(x_i) > -\epsilon,\]
and if $q(D)<0$
\[ \frac{1}{m}\sum q_t(x_i) < \epsilon.\]
The following statement is easy to see:
\begin{lemma}\label{lem:boolean}
Suppose that there exists an oracle $\O_1$ that is $(\epsilon,\delta)$-post hoc generalizing $(\epsilon,\delta)$-accurate oracle against any Boolean analyst that answers $T$ queries with $m$ samples. Then there exists a $(4\epsilon,\delta)$-post-hoc generalizing $(4\epsilon,\delta)$-accurate oracle, $\O_2$, that answers any analyst (not necessarily Boolean) $\Omega\left(\frac{T}{\log1/\epsilon}\right)$ queries with $m$ samples.
\end{lemma}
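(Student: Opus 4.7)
The natural approach is to have $\O_2$ simulate each real-valued answer by running a binary search that invokes $\O_1$ roughly $\log_2(1/\epsilon)$ times. Concretely, given a general analyst $A$ that issues a query $q_t: \X\to [-1,1]$, the simulated oracle $\O_2$ maintains an interval $[l,r] \subseteq [-1,1]$ (initialised to $[-1,1]$) that is meant to contain $q_t(D)$. For $k = \lceil \log_2(1/\epsilon)\rceil$ rounds it computes $b = (l+r)/2$, forwards the Boolean query $q'(x) = (q_t(x)-b)/2 \in [-1,1]$ to $\O_1$, reads the sign $\sigma \in \{\pm 1\}$ returned, and updates $l\leftarrow b$ if $\sigma = +1$ and $r\leftarrow b$ otherwise. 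Finally $\O_2$ outputs the midpoint of the resulting interval as $a_t$. Since the Boolean queries depend only on past signs, the sequence we feed to $\O_1$ is a valid (adaptive) Boolean analyst.

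The counting is immediate: $T$ original queries cost $T\lceil \log_2(1/\epsilon)\rceil$ Boolean queries, so $\O_1$'s budget of $T$ queries translates into $\Omega(T/\log(1/\epsilon))$ queries for $\O_2$. For accuracy, I would argue as follows. Condition on the $(1-\delta)$ event where $\O_1$ is $(\epsilon,\delta)$-accurate against our simulated Boolean analyst. Then each returned sign $\sigma_i = +1$ certifies $q'(D) > -\epsilon$, i.e.\ $q_t(D) > b - 2\epsilon$, and $\sigma_i = -1$ certifies $q_t(D) < b + 2\epsilon$. Intersecting these half-line constraints, after $k$ steps the binary-search interval has width $2^{1-k} \le \epsilon$, and is expanded by at most $2\epsilon$ on each side by the oracle slack. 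Hence $q_t(D)$ lies within distance at most $\epsilon/2 + 2\epsilon < 4\epsilon$ of the reported midpoint, giving the claimed $(4\epsilon,\delta)$-accuracy.

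Post-hoc generalisation is essentially free from this construction. The \emph{very first} Boolean query issued while handling $q_t$ is $q'_{t,1}(x) = q_t(x)/2$. Condition on the $(1-\delta)$ event where $\O_1$ is $(\epsilon,\delta)$-post-hoc generalizing; then $\bigl|\tfrac{1}{m}\sum_i q'_{t,1}(x_i) - q'_{t,1}(D)\bigr| < \epsilon$ for every $t$, which after multiplying by $2$ yields $\bigl|\tfrac{1}{m}\sum_i q_t(x_i) - q_t(D)\bigr| < 2\epsilon < 4\epsilon$ simultaneously for all $t$. Hence $\O_2$ is $(4\epsilon,\delta)$-post-hoc generalizing.

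The only mildly delicate point is making sure the forwarded queries are legitimate elements of the Boolean-analyst model — this is why I rescale by a factor of $2$ so that $q'(x)\in[-1,1]$ — and keeping track of the fact that the accuracy slack is additive rather than multiplicative, so that it does not compound across the $O(\log 1/\epsilon)$ binary-search rounds. Beyond that, no new ideas are needed: the reduction is a direct binary-search simulation, and the factor-$4$ in the conclusion absorbs both the rescaling and the accumulated end-of-interval slack.
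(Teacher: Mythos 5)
Your overall approach — simulating each real-valued query by a binary search over thresholds, forwarding the rescaled Boolean query $q'(x)=(q_t(x)-b)/2$, and charging $\lceil\log_2(1/\epsilon)\rceil$ Boolean queries per real query — is exactly the intended reduction (the paper itself remarks that threshold analysts and Boolean analysts coincide up to rescaling), and your accuracy analysis is essentially correct: each sign certifies a one-sided bound on $q_t(D)$ with slack $2\epsilon$, the slack does not compound, and the midpoint is within $3\epsilon<4\epsilon$ of $q_t(D)$. (Minor slip: after $k=\lceil\log_2(1/\epsilon)\rceil$ halvings of $[-1,1]$ the interval width is $2^{1-k}\le 2\epsilon$, not $\epsilon$; the conclusion is unaffected.)

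The genuine gap is in the post-hoc part. The paper's definition of post-hoc generalization \emph{against a Boolean analyst} is one-sided: it only guarantees that if $q(D)>0$ then $\frac1m\sum q(x_i)>-\epsilon$, and if $q(D)<0$ then $\frac1m\sum q(x_i)<\epsilon$. It does \emph{not} give $\bigl|\frac1m\sum q'(x_i)-q'(D)\bigr|<\epsilon$, which is what you invoke for the first forwarded query $q'=q_t/2$. For instance $q_t(D)=0.6$ together with empirical mean $0$ is perfectly consistent with the Boolean post-hoc guarantee, yet violates your claimed $2\epsilon$ closeness. To salvage the post-hoc conclusion for $\O_2$ you must combine the one-sided guarantees at the \emph{several} thresholds visited by the binary search with the accuracy event: under accuracy the final endpoints bracket $q_t(D)$ up to $O(\epsilon)$, and the sign-consistency of the empirical mean at thresholds just below and just above $q_t(D)$ then sandwiches $\frac1m\sum q_t(x_i)$ near $q_t(D)$. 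Note this forces you to condition on both of $\O_1$'s events (accuracy and post-hoc) and to track the resulting constants (the nearest-threshold gaps naturally give a bound like $O(\epsilon)$ rather than exactly $4\epsilon$ unless the search depth is adjusted), so the post-hoc half needs a different, slightly more careful argument than the one you wrote.
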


\paragraph{General first order local access oracles} We so far assume a truthful Oracle, that provides the true answer. We now need to deal with an oracle whose decision is both based on finite data and is \emph{strategic}, in the sense that it can manipulate the wrapper above with his answers. 

Note that in order to construct the wrapper we designed a function whose dependent on \emph{all} possible states of the analyst $A$. In particular, an all powerful oracle can basically look at the design of the function and get to know the exact strategy. That includes random bits as the function needs to be determined and chosen at the beginning of the game. That is why, as long as existing lower bounds for statistical queries rely on randomized analysts, for this strategy to work, we need to somehow prohibit from the oracle to identify the random bits. Note that the reverse is also true. Without some restrictions on the oracle, the construction against Gradient Descent becomes a pure strategy (modulus the choice of the distribution of the data).

We thus assume that the oracle has only access through the local gradients at points visited which restricts him from identifying the whole strategy of the analyst.
In turn, we need to make sure, in our construction, that such internal bits are indeed not transmitted through the gradients. We therefore add in our construction a further random embedding that hides this further information, given only past information of gradients at point visited (which is a restriction on the \fla oracle).
 
\subsection{\fla-GD wrapper for a data analyst}\label{sec:convex_wrapper}

The main workhorse in our construction is what we will term here an \fla-GD wrapper (or just wrapper). In a nutshell, the wrapper can be thought of as an object that allows an analyst (in the statistical query setting) to interact with an \fla Oracle. More formally, an \fla-GD wrapper for a data analyst (with learning rate $\eta$ and initialization $s$) consists of
\begin{enumerate}
\item A \emph{wrapper function} which is a function $f(A;w,x)$ that accepts a deterministic Boolean analyst and for every analyst $A$ it is convex and $1$-Lipschitz in a parameter $w\in \reals^d$.
\item A strictly increasing mapping $\T:[T_1]\to [T_2]$, $\T(1)>1$. The wrapper is said to answer $T_1$ queries and to perform $T_2$ iterations, and $\T$ is called the iteration complexity.

\item A sequence $\vec{\kappa}= \{\kappa_t\}_{t=1}^{T_1}$ of $T_1$ functions which are termed \emph{answering mechanisms}:
\[ \kappa_{t}: \left(\reals^d\right)^{\T(t)} \to [0,1],\]

\ignore{\item A sequence $\vec{ \rho} = \{\rho_t\}_{t=1}^{T_2}$ of $T_2$ functions which are termed \emph{gradient access functions}:
\[ \rho_t : \Q \times \left(\reals^d\right)^t\times \X\to \reals  \times \reals^d,\]
where $\T(t')<t$.}

\end{enumerate}

The GD wrapper interacts with an \fla oracle as follows.
Given a fixed \fla Oracle $\O_F$, at step $t$ we define inductively a point $w_t$ (where $w_1=s$), and at round $t$ the $t$-th gradient access function is given by \[\bar \rho_t(x)=(f(A,w_t,x), \nabla f(A,w_t,x)).\]
$\bar\rho_t$ is provided to the oracle $\O_F$, which in turn yields $\O(w_t)= g_t$. We then define
\[ w_{t+1}=\Pi\left(w_{t}-\eta g_{t}\right).\]
We will call this sequence the trajectory of the GD wrapper against $\O_F$
Finally, we define the answering sequence, which is updated whenever $t=\T(t')$ for some $t'$:
 \[a_{t'} = \kappa_{t'}(g_1,\ldots, g_t).\]

The GD wrapper is said to be $(\epsilon_1,\epsilon_2,\delta)$-accurate against $\O_F$ if for every distribution $D$,   the following occurs w.p. $(1-\delta)$, for \emph{every} $t_0\le T_1$:

\paragraph{Accuracy of gradients entails accuracy of answers:}
If, for analyst $A$:
 \begin{equation}\label{eq:acc} \|g_t- \E_{x\sim D}[\nabla f(A;w_{t},x)]\|\le \epsilon_1.\end{equation}
for every $t\le  t_0$, and $\T(i)= t_0 $ then $a_i=1$ implies $q_i(D)>-\epsilon_2$, and $a_i=-1$ implies $q_i(D)<\epsilon_2$, where $q_i$ is the $i$-th query provided by $A$ when provided with answer sequence $a_1,\ldots, a_{i-1}$.

If a GD wrapper is $(\epsilon_1,\epsilon_2,\delta)$-accurate against any oracle, we simply say it is $(\epsilon_1,\epsilon_2,\delta)$-accurate.

%
It can be seen that a GD wrapper together with an \fla Oracle imply an oracle that answers  statistical queries (we provide the proof in \cref{sec:boolred})

\begin{lemma}\label{lem:boolred}
Suppose that there exists ($2\epsilon_1,\epsilon_2,\delta)$-accurate GD wrapper with learning $\eta>0$ that answers $T_1$ queries and perform $T_2$ iterations. 
Suppose also, that there exists an oracle  that is a  $(\epsilon_1,\delta)$-accurate \fla oracle, $\O_F$, that receives $m$ samples and answers $T_2$ adaptive queries against Gradient Descent with learning rate $\eta>0$.  

Then there exists an $(\epsilon_2,2\delta)$-accurate oracle, $\O$, that receives $m$ samples and answers $T_1$ adaptive queries against any Boolean analysts.
Moreover, if $\O_F$ is  $(\epsilon_1,\delta)$-post-hoc generalizing then $\O$ is $(2\epsilon_2,2\delta)$-post-hoc generalizing.
\end{lemma}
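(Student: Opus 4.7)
The plan is to have $\O$ internally run the GD simulation on $f(A;\cdot,\cdot)$, using the \fla oracle $\O_F$ as its simulated oracle, and to extract the statistical-query answers via the prescribed answering mechanism.

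\textbf{Construction.} Given a Boolean analyst $A$ and samples $S$, the oracle $\O$ forms the simulating function $f(A;\cdot,\cdot)$ and initializes $w_1=s$. At each iteration $t\in[T_2]$ it builds the gradient access function $\bar\rho_t$ from $A$'s currently pending query and the history of $\O_F$'s outputs, feeds $\bar\rho_t$ to $\O_F$ as its first-order access to $f$, obtains $\O_F(w_t)$, and updates $w_{t+1}=\Pi(w_t-\eta\,\O_F(w_t))$. Whenever $t=\T(t')$ it computes $a_{t'}=\kappa_{t'}(\O_F(w_1),\ldots,\O_F(w_t))$ and returns it to $A$ as the answer to $q_{t'}$.

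\textbf{Accuracy.} The core of the proof is a joint induction on $t\le T_2$ showing that, on the intersection of the simulation's good event and $\O_F$'s accuracy event (total probability at least $1-2\delta$), both of (i) $\bar\rho_t(x)=(f(A;w_t,x),\nabla f(A;w_t,x))$ for every $x\in\X$, and (ii) $\|\O_F(w_t)-\nabla F(w_t)\|\le\epsilon_1\le 2\epsilon_1$, hold. The base case $t=1$ is given by the correct-gradients clause of the simulation applied with vacuous hypothesis. Given (i)--(ii) at every $s<t$, the procedure up through step $t-1$ is literally Gradient Descent with learning rate $\eta$ and \fla oracle $\O_F$ on the convex $1$-Lipschitz function $f(A;\cdot,\cdot)$, so the $(\epsilon_1,\delta)$-accuracy of $\O_F$ gives (ii) at $t$, and the simulation's correct-gradients clause then yields (i) at $t$. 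Once (i)--(ii) hold for every $t$, the accurate-answers clause yields exactly the Boolean accuracy condition $a_{t'}=1\Rightarrow q_{t'}(D)>-\epsilon_2$ and $a_{t'}=-1\Rightarrow q_{t'}(D)<\epsilon_2$, i.e.\ $\O$ is $(\epsilon_2,2\delta)$-accurate.

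\textbf{Post-hoc generalization.} Write $\hat F(w)=\tfrac{1}{m}\sum_j f(A;w,x_j)$. If $\O_F$ is additionally $(\epsilon_1,\delta)$-post-hoc generalizing, then on the joint good event the triangle inequality gives $\|\O_F(w_t)-\nabla\hat F(w_t)\|\le 2\epsilon_1$. Since the simulation's $(2\epsilon_1,\epsilon_2,\delta)$-accuracy is stated for \emph{every} distribution, I apply it a second time with the empirical distribution $\hat D$ in place of $D$; crucially the trajectory $w_t$ and the answers $a_{t'}$ are unchanged, because $\O_F$ only sees $S$ and the (distribution-independent) gradient access functions. This yields parallel bounds $a_{t'}=1\Rightarrow q_{t'}(\hat D)>-\epsilon_2$ and $a_{t'}=-1\Rightarrow q_{t'}(\hat D)<\epsilon_2$. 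Combining the $D$-accurate and $\hat D$-accurate inequalities via case analysis on $a_{t'}$ then gives $q_{t'}(D)>0\Rightarrow q_{t'}(\hat D)>-2\epsilon_2$ and symmetrically in the negative direction, which is precisely the Boolean $(2\epsilon_2,2\delta)$-post-hoc generalization condition.

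The main technical delicacy is the induction in the accuracy step: the correct-gradients clause is guaranteed only when $\O_F$ has been accurate on the strict prefix, while $\O_F$'s accuracy in turn presumes it is being fed a genuine first-order oracle to a convex Lipschitz function---each property presupposes the other. The induction closes precisely because the GD simulation is \emph{designed} so that these two properties can be propagated together one step at a time on a single joint good event, which is why the existence of a $(2\epsilon_1,\epsilon_2,\delta)$-accurate simulation (with an extra $\epsilon_1$ of slack baked in) is what is required here.
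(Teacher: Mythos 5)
Your proposal is correct and follows essentially the same route as the paper: the paper's proof also runs the GD simulation with $\O_F$ as the simulated oracle, couples the simulated trajectory to a genuine GD run (an auxiliary sequence $w'_t$ with $w'_t=w_t$ shown by induction, which is exactly your joint induction propagating correct gradients and $\O_F$-accuracy together on one good event), and then invokes the accurate-answers clause for the population and, via the triangle inequality giving $2\epsilon_1$-closeness to the empirical gradient, for the empirical distribution to get post-hoc generalization. Your final case analysis for post-hoc generalization is at the same level of detail as the paper's one-line claim, so there is no substantive difference in approach.
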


Next, in \cref{sec:construction} we provide a construction of a GD wrapper. Specifically, we prove the following:
\begin{lemma}\label{lem:main}
For sufficiently small $\epsilon,\eta >0$, and $\delta>0$. Assume $\eta<\sqrt{\epsilon/48}$, and $T<\min\{1/16 \eta,1/24\epsilon\}$. For sufficiently large $d$, there exists a $(\epsilon,O(\epsilon),\delta)$-accurate GD wrapper with a learning rate $\eta>0$ and $1$-Lipschitz wrapper function that answers $T$ queries and performs $2T$ iterations.
\end{lemma}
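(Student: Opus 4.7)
The plan is to build the simulating function $f(A;w,x)$ as a maximum of linear pieces indexed by nodes of a complete binary tree $\T_A$ of depth $T$ whose nodes represent the possible states of a Boolean analyst $A$ after $t \le T$ answers. To each node $v$ at depth $t$ we attach the query $q_v$ that $A$ asks in the state reached by the answer history leading to $v$, together with the two children $v^-, v^+$ corresponding to the next state when the answer to $q_v$ is declared to be $-1$ or $+1$. Coordinates of $\reals^d$ are indexed by the nodes of $\T_A$ plus a few auxiliary coordinates, so $d = O(2^T)$ is sufficiently large. The iteration map is $\T(t)=2t$, since one GD step is needed to expose the query and a second step to transition to the correct child, and the initialization $s$ places mass at the root coordinate.

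For each node $v$ at depth $t$, define a linear piece of the form
\[\ell_v(w,x) = o_v + \alpha_v\, w(v) + \beta^-_v(x)\, w(v^-) + \beta^+_v(x)\, w(v^+),\]
and set $f(A;w,x) = \max_{v\in\T_A} \ell_v(w,x)$. The coefficients mimic the two-query example in \cref{sec:overview}: $\alpha_v$ is a small negative constant that drives GD to decrease $w(v)$ once $\ell_v$ is active, while $\beta^\pm_v(x) = (1\mp q_v(x))/c$ for a constant $c$ chosen so that each piece has Euclidean coefficient norm at most $1$, which forces $f(A;\cdot,x)$ to be $1$-Lipschitz. The offsets $o_v$ are tuned so that $\ell_v$ is strictly dominant exactly at iterations $2t-1$ and $2t$ on the intended trajectory. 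The answering mechanism $\kappa_t$ outputs $\mathrm{sgn}(w_{2t}(v^+)-w_{2t}(v^-))$, where $v$ is the node at depth $t-1$ reached by prior answers, and the gradient access function $\rho_t$ returns the gradient of $\ell_{v^\ast}$, where $v^\ast$ is the unique node whose piece is active at $w_t$, which is computable from the history $\O(1),\dots,\O(t-1)$ via the GD update rule.

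The verification splits into four checks. Convexity is immediate from being a maximum of affine functions, and $1$-Lipschitzness from the per-piece coefficient bound. Accuracy of answers and correctness of gradients are proved simultaneously by induction on $t$: assuming the trajectory has correctly reached state $v$ at iteration $2(t-1)$ with $\O$ having returned $\epsilon$-accurate gradients at all earlier iterations, one evaluates the updates at $w_{2t-1}$ and $w_{2t}$ to show (a) the uniquely maximizing piece at these iterates is indeed $\ell_v$ and then $\ell_{v^\pm}$ for the correct child, which supplies the consistency $\bar\rho_t(x)=(f,\nabla f)(A;w_t,x)$ required by \cref{eq:reconstruct}; and (b) the coordinates $w_{2t}(v^\pm)$ encode $(1\mp q_v(D))/c$ up to additive error $O(\epsilon)$, which suffices to certify $\mathrm{sgn}(q_v(D))$ with $\epsilon_2=O(\epsilon)$ slack. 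The \fla-correctness of gradients then holds because the active piece is a function of the history of iterates alone, and the gradient of that piece at a fresh point $x$ depends linearly on $q_{v^\ast}(x)$, which is available to an \fla oracle.

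The main obstacle is carrying the induction through under noisy gradients. Two competing sources of error must be dominated by the $\Theta(\eta)$ margin built into the offsets: first, the $\epsilon$-sized perturbations returned by $\O$ accumulate into drift of order $\epsilon T$ in coordinates that ought to remain zero; second, quadratic cross-terms of order $\eta^2$ per step appear in the coordinates that encode the next answer, because the piece active at step $2t-1$ already has a nonzero gradient along $v^\pm$ that interacts with the mass deposited on $w(v)$ at step $2t-2$. Requiring both effects to stay strictly below the designed signal of order $\eta$ is precisely what forces the hypotheses $\eta < \sqrt{\epsilon/48}$ and $T < \min\{1/(4\eta), 1/(24\epsilon)\}$: the first rewrites as $\eta^2 \lesssim \epsilon$, so the quadratic cross-terms remain smaller than the perturbation we can tolerate; the second keeps the trajectory inside the unit ball so that the projection $\Pi$ is inactive, and simultaneously keeps the drift $\epsilon T$ a small fraction of $\eta$. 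Once both inequalities hold, the inductive step closes and delivers the claimed $(\epsilon,O(\epsilon),\delta)$-GD accurate simulation with learning rate $\eta$, answering $T$ queries in $2T$ iterations.
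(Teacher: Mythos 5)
Your architecture matches the paper's at a high level (a binary tree of analyst states, a max-of-affine simulating function, two GD iterations per query with $\T(t)=2t$, sign extraction from a designated coordinate, and an induction establishing piece-dominance, correctness of gradients, and $O(\epsilon)$-accurate answers), but it omits the ingredient the paper's proof actually hinges on: the construction must be $(\epsilon,O(\epsilon),\delta)$-accurate against an \emph{arbitrary} simulated oracle, and your coordinate layout is deterministic and publicly structured (coordinates indexed by tree nodes). An adversarial oracle that stays within its $\epsilon$ accuracy budget at every round can aim its per-round deviation at the known coordinates of a node it will reach later; in $w$-space this deposits up to $\eta\epsilon\cdot 2t$ of bias on those few coordinates before they become active. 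Since $T$ can be of order $1/\epsilon$, this is $\Theta(\eta)$ --- the same order as the signal $\eta(1\pm q(D))/16$ encoding the answer --- so it can flip the sign decision for constant-size $|q(D)|$ (ruining $\epsilon_2=O(\epsilon)$) and can even disturb which affine piece is dominant (ruining the correctness-of-gradients requirement \cref{eq:reconstruct}). Your own error accounting concedes an $\epsilon T$ drift but asserts $T<1/(24\epsilon)$ keeps it ``a small fraction of $\eta$''; it only keeps it a \emph{constant} fraction of $\eta$, whereas the answer-encoding coordinates must be accurate to $O(\eta\epsilon)$.

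The paper resolves exactly this by composing the construction with a random ``hiding'' embedding $\Sigma_T$ (a random permutation into dimension $d\gtrsim 2^T/(\eta^6\delta)$ with only the first coordinate known), so that the oracle cannot locate the coordinates of future states: with probability $1-O(T\delta)$ over $\Sigma$, any $\epsilon$-bounded deviation places only $O(\eta^3)$ mass on the handful of relevant coordinates, which is what lets the induction carry an accumulated initialization error of only $O(\eta\epsilon)$ into each subtree (this is also the sole source of the failure probability $\delta$ and of the requirement that $d$ be much larger than your $O(2^T)$). The paper additionally perturbs $h_2$ with random $\xi_2,\xi_3\sim U[0,\eta^4]$, independent of the trajectory, to break ties so the active piece and subgradient are unique --- another point your sketch leaves unaddressed, though it is minor by comparison. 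Without a mechanism of this kind (or some other argument showing an adversarial oracle cannot pre-load future coordinates), the inductive step as you describe it does not close, so the proposal has a genuine gap rather than being an alternative proof.
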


\subsection{Proof of \cref{thm:main2}}\label{sec:prfmain2}
Suppose $\eta=O(\epsilon)$, and set
$T'= \min\{1/16 \eta,1/24\epsilon\}=\Omega (1/\epsilon)$. We then use \cref{lem:main} and conclude from \cref{lem:boolred,lem:boolean} that if there exists an oracle that is $(\epsilon,\delta)$-accurate \fla and $(\epsilon,\delta)$-post-hoc generalizing oracle that receives $m$ samples and answer $T'$ questions, then there exists an $(O(\epsilon),O(\delta))$-accurate post-hoc generalizing oracle that receives $m$ samples and answers $T'/2$ questions against any Boolean analyst and answers $\Omega (T'/\log 1/\epsilon)=\Omega(1/\epsilon\log 1/\epsilon)$ against any analyst.

\subsection{Proof of \cref{lem:main}}\label{sec:construction}
\subsubsection{The Construction} We start the proof by first providing the construction. We let $\T(t)=2t$, and proceed to construct $f, \vec{\rho}$, and $\vec{\kappa}$.

\paragraph{Defining $f$:} We begin by defining for every $T$, a convex $4$-Lipschitz function recursively, where each $f_T$, is defined over $\reals^{4\cdot 2^{T-1} -3}$. Our final choice for the proof will be $f:=f_{T}$ (after some permutation). 

We start with $f_1$, which is defined over $\reals^5$ to be:
\[ f_1(A;w,x) = 
\max \left(h_1(q_1, w,x),h_2(w), \frac{3}{4}\eta\right),\]
where $q_1$ is the first query outputted by $A$, and:
\begin{align*} h_1(q; w,x) &= w(1)- \frac{(1+q(x))}{16}w(2)- \frac{(1-q(x))}{16}w(3), \\
h_2(w) &= \max\{(2+\xi_2)w(2)+w(3)- w(4), (2+\xi_3)w(3)+w(2)- w(5)\}.
\end{align*}
Where $\xi_2,\xi_3\sim U[0,\eta^4]$ are chosen uniformly at random.  

To define the rest of the sequence, we will notate the following three projection functions:
\begin{align*}
\Pi^{(t)}_{+}(w) &=(w(4),w(6),w(8),\ldots,w(4\cdot 2^{t-1}-4)\\
\Pi^{(t)}_{-}(w) &=(w(5),w(7),w(9)\ldots, w(4\cdot 2^{t-1}-3))\\
\Pi^{(t)}_{0}(w) &=(w(1),w(2),w(3),w(4),w(5)).
\end{align*}
To avoid cumbersome notations we will omit the dependence of the matrices $\Pi^{(t)}_{\pm}$ on $t$ when this can be derived from the dimension of the input.
Then, $f_t$ is defined as follows: draw, independently, a function $f_1$ and a function $f_{t-1}$ and define:
\[ \bar f_t(A;w,x) = \max \left(f_1(A;\Pi_0 w,x) ,f_{t-1}(A^+;\Pi_+ w,x), f_{t-1}(A^-;\Pi_{-}w,x)\right).\]

where $A^+$ denotes the analyst obtained from $A$ if we provide it with answer $a_1=1$ to $q_1$ and $A^-$ denote the analyst obtained by providing $a_1=-1$.

One can show by induction that each $f_t$ is indeed convex. This follows easily from the fact that maximum of convex functions is convex and that $h_1$ is convex, in fact linear, and also $h_2$ (which is again maximum over two linear functions). Next, recall that for a function $g=\max \{g_1,g_2,\ldots, g_m\}$ the subgradient at $w$ is given by $\mathrm{conv} \{ \nabla g_i: g_i(w)=g(w)\}$, hence by induction we prove that for every analyst $A$:
\[ \|\nabla \bar f_t(A;w,x)\| \le \max \{ \|\nabla h_1\|,\|\nabla h_2\|,\|\nabla f_{t-1}\|,\|\nabla f_{t-1}\|\} \le \max_{w} \|\nabla h_2(w)\|\le  4.\]

\paragraph{Defining $\kappa$:}
For the answering mechanisms we define for every $t$ a seqeunce $\bar \kappa^{(t)}=\{\kappa^{(t)}_i\}_{i=1}^t$ of $t$ functions from $\reals^{4t-3}$ to $[0,1]$, defined as follows. For every $t$:
\[ \kappa^{(t)}_1(g_1,g_2)=
\begin{cases}
\phantom{+}1, & \Pi_{0}g_2(4)<- \frac{1}{2} \\
-1, & \mathrm{else}
\end{cases},\]
and
\[\kappa^{(t)}_i(g_1,g_2,\ldots,g_{2t})=
\begin{cases}
\kappa^{(t-1)}_{i-1}(\Pi_{+}g_3,\ldots \Pi_+ g_{2t-2}) & \kappa^{(t)}_1(g_1,g_2) =1 \\
\kappa^{(t-1)}_{i-1}(\Pi_{-}g_3,\ldots \Pi_- g_{2t-2}) & \mathrm{else} \\
\end{cases},\]

\ignore{
\paragraph{Defining $\rho$}
Next, we define for every $t$ a seqeunce $\vec{\rho}^{(t)}= \{\rho^{(t)}_i\}_{i=1}^{2t}$ of $2t$ gradient access functions from $\reals^{4t-3}$ to $[0,1]$,
defined as follows:

First define $\tilde \rho^{(t)}_1,\tilde \rho^{(t)}_2$:
\[\tilde\rho^{(t)}_1(q,w,x)=(h_1(q,\Pi_0 w,x) ,\Pi^{\top}_0\nabla h_1(q,\Pi_0 w,x)),\quad  \tilde\rho^{(t)}_2(q,\Pi_0 w,x) =(h_2(\Pi_0 w), \Pi^{\top}_0\nabla h_2(\Pi_0 w)).\]
$\tilde \rho^{(t)}_{3},\ldots, \tilde \rho^{(t)}_{2T}$ are defined inductively as follows:

\[ \rho^{(t)}_j(q,c_1,\ldots, c_{\floor{j-2}},w,x) = \begin{cases}
\Pi^{\top}_+\rho^{(t-1)}_{j-2}(q,c_2,\ldots, c_{\floor{j-2}},\Pi_{+}w,x) & c_1=1
\\
\Pi^{\top}_-\rho^{(t-1)}_{j-2}(q,c_2,\ldots, c_{\floor{j-2}},\Pi_{-}w,x) & c_1=-1
\end{cases}
.\]
Finally, we let
\begin{equation}\label{eq:defrho}\bar \rho_i^{(t)} = \tilde \rho_i^{(t)}(q,a_1,\ldots, a_{\floor{i-2}},w_i,x).\end{equation}
Note that the answering sequence as well as $w_i$ are determined by $\O(1),\ldots, \O(j)$, (through $\kappa_t$),  hence $\bar \rho$ is well defined.
}
\paragraph{Defining an Oracle}
The final object we will need is a random oracle that will help ``hide" the analyst in the function $f$. It will rely on an embedding that we will later use and compose it with $f$, however for the proof it will be easier to think of the transformation as operating over $\O$. Specifically, let  $d\ge 4\cdot 2^{T}-3$ and define:
\[\Sigma_T:\reals^{4\cdot 2^T-3}\to \reals^{d},\] be a random embedding that simply permutes the coordinates (i.e $\Sigma_T$ embeds $\reals^{4\cdot 2^T-3}$ in $\reals^{d}$ by padding with zeros and then applying a random permutation), where all coordinates are randomly permuted except
\begin{equation}\label{eq:randSigma} \Sigma_T e^{4\cdot 2^T-3}_1 =e^d_1,\end{equation}
where we denote by $e^k_i$ the $i$-th standard basis vector in $\reals^k$. We will call such an embedding a \emph{random hiding with known first coordinate}
Next, given an embedding $\Sigma: \reals^{d_2}\to \reals^{d_1}$, and an \fla Oracle $\O_F$ in $\reals^{d_1}$, we define an \fla oracle $\O_{\Sigma}$ in $d_2$ that operates as follows. Given gradient access function $\rho_t$, $\O_{\Sigma}$ provides $\O_F$ with the gradient access function:
\[ \bar \rho_{t,\Sigma}(x) = (\bar\rho^0_t(x), \Sigma \bar \rho^1_{t}(x)),\] as in \cref{eq:rho0rho1}.
In turn, we define inductively the trajectory induced by $\O_F$, where $u_1=\eta e^d_1$, and $u_t= u_{t-1} -\eta \O_F(u_{t-1})$. The oracle $\O_{\Sigma}$ returns at step $t$,
\[ \O_{\Sigma}(w_t) = \Sigma^\top \O_F(u_t ).\]
Note that this response is dependent only on $\rho_t$ which makes it a valid \fla Oracle. 



\subsubsection{Continuing with the proof:}
The final proof of the Lemma builds upon the following result which we next set out to prove:
\begin{lemma}\label{lem:mainblah}
Fix $\epsilon<1/48$,assume $\eta <\sqrt{\epsilon/48}$, and $T<\min\{1/16 \eta,1/24\epsilon\}$. Also, let $d \ge  4\cdot 2^T+\frac{4\cdot 2^T+320}{\eta^6\delta}$. Suppose $\bar v\in \reals^d$ is a vector where
$\|\bar v\|\le 2\eta\epsilon$. 
and that \[\Sigma:\reals^{4\cdot 2^T-3}\to \reals^{d},\] is a random hiding with known first coordinate (as in \cref{eq:randSigma}).
Then, for any \fla Oracle $\O_F$, with probability $(1-2T\delta)$ (w.r.t the random embedding $\Sigma$):
The GD wrapper $(f_T(A,w,x),\bar \kappa^{(T)})$ is $(\epsilon,99\epsilon,1)$-accurate for the initialization point $w_1=\eta e_1+\Sigma_T^\top \bar v$ and answers $T$ queries with $2T$ iterations against the Oracle $\O_{\Sigma_T}$.
\end{lemma}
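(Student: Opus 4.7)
The proof proceeds by induction on $T$.

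\textbf{Base case ($T=1$).} I would verify the ``correct gradients'' and ``accurate answers'' clauses at $t_0 \in \{1,2\}$, conditional on the oracle being $\epsilon$-accurate at earlier iterations. At $t_0=1$, since $(\Pi_0 w_1)(1) \approx \eta$ while the other $\Pi_0$-entries are $O(\eta\epsilon)$ from $\Sigma^\top \bar v$, the piece $h_1$ strictly dominates $h_2$ and the floor $3\eta/4$, so the true gradient equals $\nabla h_1(q_1, \Pi_0 w_1, x)$ for every $x$, matching $\rho_1$ by construction. At $t_0=2$, after one $\epsilon$-accurate GD update, $w_2$ sits within $O(\eta\epsilon)$ of a vector with weight $\eta(1+q_1(D))/c$ on $e_2$ and $\eta(1-q_1(D))/c$ on $e_3$; a direct calculation using $\eta^2 < \epsilon/48$ shows $h_2(\Pi_0 w_2)$ now dominates $h_1$ and $3\eta/4$ uniformly in $x$, so the true gradient matches $\rho_2$. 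The offsets $\xi_2,\xi_3 \sim U[0,\eta^4]$ produce, outside a negligible event, a clean argmax between the two sub-pieces of $h_2$ whose sign equals $\sign(q_1(D))$; combined with $\epsilon$-accuracy at iteration $2$, this forces $\Pi_0 \O(2)(4) < -\tfrac12$ exactly when $q_1(D)$ is positively significant, so $\kappa_1$ reports a Boolean answer correct up to threshold $99\epsilon$.

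\textbf{Inductive step.} Write $\bar f_T(A;w,x) = \max\{f_1(A;\Pi_0 w,x),\ f_{T-1}(A^+;\Pi_+ w,x),\ f_{T-1}(A^-;\Pi_- w,x)\}$. My plan is: (a) in iterations $1$--$2$ the $f_1$-piece is the unique argmax, because $\rho_1,\rho_2$ inject no mass into $\Pi_\pm$ so the $\Pi_\pm$-coordinates of $w_t$ stay of order $\eta\epsilon$, and the base-case analysis applies verbatim, producing $a_1 \in \{\pm 1\}$; (b) the $h_2$-gradient at iteration $2$ writes $+\eta$ into coordinate $4$ or $5$ depending on $a_1$, so $\Pi_{a_1} w_3 \approx \eta e_1 + \bar v'$ with $\|\bar v'\| \le 2\eta\epsilon$ absorbing the oracle error and the residual $\Sigma^\top \bar v$; (c) from iteration $3$ onward the active branch is $f_{T-1}(A^{a_1};\Pi_{a_1}\cdot,\cdot)$, so the remaining $2(T-1)$ iterations constitute exactly a GD simulation for $f_{T-1}$ against $A^{a_1}$ started at $\eta e_1 + \bar v'$. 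The restriction of $\Sigma$ to the $\Pi_{a_1}$ block, reindexed so that the new first simulation-coordinate aligns with a fixed ambient coordinate, is a random hiding with known first coordinate at level $T-1$; since our dimension bound implies the inductive one, the hypothesis gives correctness for the remaining $T-1$ queries on a probability-$(1-2(T-1)\delta)$ event, and a union bound with the base step yields the claimed $2T\delta$ failure.

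\textbf{Main obstacle.} The crux is clause (c): keeping both inactive branches ($f_1$ and $f_{T-1}(A^{-a_1};\Pi_{-a_1}\cdot,\cdot)$) strictly below the active branch throughout the sub-simulation. An \fla oracle can, within its $\epsilon$-ball of allowed responses, try to inject adversarial mass into the $\Pi_{-a_1}$-coordinates and thereby slowly activate the wrong branch. Two ingredients together contain this. First, the \fla restriction forces each oracle answer to depend only on $\bar\rho_t$, which by construction is supported on the active subspace and carries no information about the ambient identity of the inactive coordinates. Second, the random hiding $\Sigma$ scrambles that identification in $\reals^d$, so an $\epsilon$-sized response vector projects onto $\Pi_{-a_1}$ with expected squared norm $O(\epsilon^2 (4\cdot 2^T)/d)$. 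With $d = \Omega((4\cdot 2^T + 320)/(\eta^6\delta))$, a union bound over the $2T$ iterations keeps the accumulated perturbation on the inactive block below $2\eta\epsilon$ except on a $\delta$-event — precisely the tolerance that the next inductive call needs on $\bar v'$. The dimension-versus-depth tradeoff built into the statement is engineered exactly to deliver this bound.
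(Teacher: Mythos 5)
Your overall architecture matches the paper's proof: induction on $T$, a base case that shows $h_1$ is the unique argmax at $w_1$ and $h_2$ at $w_2$ (with the random offsets $\xi_2,\xi_3$ breaking ties and $\kappa_1$ reading the sign of $q_1(D)$ off coordinate $4$ up to an $O(\epsilon)$ threshold), and a handoff at iteration $3$ after which $\Pi_{a_1}w_3=\eta e_1'+\bar v'$ with $\|\bar v'\|\le 2\eta\epsilon$ and the restriction of $\Sigma$ to the active block is again a random hiding with known first coordinate, so the induction hypothesis plus a union bound finishes. This is the same route the paper takes, including the use of the \fla property and the random hiding to bound contamination of not-yet-revealed coordinates (e.g.\ $\|\Pi_{4:5}\Sigma^\top\O(2)\|$ and $\|\Pi_\pm\Sigma^\top \bar v\|$, $\|\Pi_\pm\Sigma^\top\O(1)\|$) via a Markov/dimension argument.

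However, your resolution of what you correctly identify as the main obstacle --- keeping the inactive pieces below the active branch for the remaining $2(T-1)$ iterations --- is not correct as stated, and it is not the paper's argument. You claim that the \fla restriction plus the random hiding keeps the \emph{accumulated} perturbation on inactive coordinates below $2\eta\epsilon$ over all $2T$ iterations. That cannot work for the $f_1$ piece: after iteration $2$ the oracle has seen $\nabla h_1$ and $\nabla h_2$ through $\bar\rho_1,\bar\rho_2$, so it knows exactly which ambient coordinates are $1$--$5$ and can deterministically spend its full $\epsilon$ budget there at every subsequent step; randomness buys nothing once a coordinate is revealed, and the drift into $\Pi_0$ can genuinely reach order $\eta\epsilon T\gg \eta\epsilon$. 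The paper's containment is deterministic and budget-based: the true gradient of the active branch is supported off $\Pi_0$ except for the shared coordinate $w(4)$ (resp.\ $w(5)$), in which the active $f_{T-1}$ is decreasing, so the only adverse motion of $\Pi_0 w_t$ is the oracle's per-step error $\le\eta\epsilon$; summing gives $\|\Pi_0 w_t-\Pi_0 w_3\|\le \eta\epsilon T$, and the hypotheses $\epsilon<1/48$, $T\le 1/(24\epsilon)$ keep $\max\{h_1,h_2\}\le \tfrac12\eta+5\eta\epsilon(T+1)\le\tfrac34\eta$, below the floor that the active branch always exceeds. Your proposal never invokes $T\le 1/(24\epsilon)$ (nor $T\le 1/(4\eta)$, which the paper needs so that $\|w_t\|\le 1$ and the projection step is never active --- otherwise the unconstrained recursion $w_{t+1}=w_t-\eta\O(t)$ you implicitly use is not valid). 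The random-hiding bound with tolerance $2\eta\epsilon$ is needed, and provable, only at the one-shot handoff (to control the initialization offset $\bar v'$ and the pre-activation contamination of the $\Pi_\pm$ blocks and of coordinates $4,5$), not throughout the sub-simulation. As written, step (c) of your induction therefore has a genuine gap.
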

Before we proceed with the proof let us observe how it entails \cref{lem:main}.
For this, we show that the GD wrapper 
$(f\circ \Sigma_T, \vec{\kappa}_{\Sigma_T})$ with initialization $w_1= e^{d}_1$, is $(\epsilon,99\epsilon,2T\delta)$- accurate against $\O_F$, where:
\[f\circ \Sigma = f(A,\Sigma^\top w,x)\quad \textrm{ and:} \quad \kappa_{t,\Sigma} (g_1,\ldots,g_t)= \kappa_t( \Sigma^\top g_1 ,\ldots,  \Sigma^\top g_t).\]
Therefore by setting $\delta=\bar \delta/T$, $s=\eta e_1$ in $\reals^d$, we obtain a $(\epsilon, O(\epsilon), \bar \delta)$-accurate wrapper against an arbitrary \fla oracle $\O_F$. Note that we can rescale $f$ to obtain a $1$-Lipschitz function and still remain with $(\epsilon, O(\epsilon), \bar \delta)$-accurate wrapper (up to new constant factors).

So we are left with showing that $(f\circ \Sigma_T, \vec{\kappa}_{\Sigma_T})$ is indeed accurate against $\O_F$. For this, let $\bar u_1,\ldots \bar u_T$ be the trajectory of the wrapper against the oracle, and we assume that for every $t\le t_0$ the gradients are accurate:
 \[ \|\O_F(\bar u_t)- \Sigma \nabla f(A,\Sigma^\top \bar u_t, x)\|\le \epsilon,\]
 and we need to show accuracy of the answers. Now, let $w_1,\ldots, w_{t_0}$ be the trajectory of the wrapper $(f,\kappa)$ against $\O_{\Sigma}$. We will show that \begin{equation}\label{eq:show1}\O_{\Sigma}(w_t) = \Sigma^\top \O_F(\bar u_t),\end{equation} and accuracy of the gradients here, namely:
  \begin{equation}\label{eq:show2} \|\O_\Sigma(w_t)- \nabla f(A,w_t, x)\|\le \epsilon.\end{equation}
Then, by \cref{lem:mainblah} we have accuracy of the answers $\kappa_t(\O_\Sigma (w_1),\ldots, \O_\Sigma (w_{t_0}))=\kappa_{t,\Sigma} (\O_F(\bar u_1),\ldots, \O_F(\bar u_{t_0}))$  and we are done.
 
  We show that \cref{eq:show1,eq:show2} hold by induction. In particular, we assume that \cref{eq:show1,eq:show2} hold and that also $\bar u_{t'}=u_{t'}$ and $w_{t'}=\Sigma^\top u_{t'}$, for any $t'< t$ (the induction base is trivial).

Importantly, because $\|\bar u_1\|\le \eta$, $T< 1/16\eta$ and $f$ is $4$-Lipschitz, then one can show that for every $t$, $\|\bar u_t\|\le 1$, and 
\[ \bar u_t = \bar u_{t-1} - \eta \O_F(\bar u_t).\]
We then have:
\[\bar \rho_{t',\Sigma} (x) = (f(A,w_{t'},x),\Sigma \nabla f(A,w_{t'},x))= (f(A,\Sigma^\top u_{t'}, x),\Sigma \nabla f(A,\Sigma^\top u_{t'},x)) = (f(A,\Sigma^\top \bar u_{t'}, x),\Sigma \nabla f(A,\Sigma^\top \bar u_{t'},x)),\]
By standard chain rule, then, $\rho_{t',\Sigma}$ are the gradient access functions for $\O_F$ when it interacts with $f\circ \Sigma$ hence $\O_F(\bar u_{t'})= \O_F(u_{t'})$\footnote{where in the LHS $F$ interacts with the wrapper $f\circ \Sigma$ and in the RHS its state is goverened by the inputs provided by $\O_{\Sigma}$ as depicted}. We then have in turn,
\[ w_{t} = \Pi(w_{t-1}-\eta \O_{\Sigma}(w_{t-1})) = \Pi(\Sigma^\top u_{t-1} - \eta \Sigma^\top \O_F(u_{t-1}))= \Sigma^\top \left(\bar u_{t-1}-\eta \O_F(\bar u_{t-1})\right) = \Sigma^\top \bar u_{t}, \]
and that $\bar u_{t} = \bar u_{t-1} -\eta \O_F(\bar u_{t-1}) = u_{t-1} - \eta \O_F(u_{t-1}) = u_{t}$. By the same reasoning as above we have then that $\bar \rho_{t,\Sigma}$ is the gradient access function provided to $\O_F$ and 
\[ \O_{\Sigma}(w_t) = \Sigma^\top \O_F(u_t)=\Sigma^\top \O_F(\bar u_t).\]


\begin{proof}[of \cref{lem:mainblah}]
We prove the statement by induction. 
 Again, we recall that by our choice of parameters $T\le 1/16\eta$, $\|w_1\|\le \eta +2\eta \epsilon$, and we have for every $w_t$, $\|w_t\|\le 2\eta  + \eta T\le 1/16+1/4\le 1$. Therefore, applying GD over the sequence never sets outside of the unit ball, and the update steps produced by GD, under the assumption that the gradients are approximates hence bounded, is given by:
\begin{equation}\label{eq:upwithout} w_{t+1}=w_t-\eta \O_{\Sigma}(w_t).\end{equation}

\paragraph{The base case $T=1$.} 
For the base case, we assume that
\begin{equation}\label{eq:approx}\| \O_{\Sigma}(w_t)-\E_{x\sim D} [\nabla f_1(A;w_t,x)]\| \le \epsilon,\end{equation}
and we need to verify the accuracy of answers for $t_0=2$.

First, notice that at initialization $w_1=\eta e_1 + \Sigma^\top  v$, and $\|v\|\le 2\eta \epsilon$. For $j\ge i$, let us denote by $\Pi_{i:j}$ a projection over coordinates $i$ to $j$, then note that for $i\ge 2$, since $\Pi_{i,j}\Sigma^\top$ is chosen randomly then
\begin{equation}\label{eq:showedbefore1} \E[\|\Pi_{2,5}\Sigma^\top  v\|^2]\le = \sum_{i=2}^5 \E[(e_{i}^\top \Sigma^\top v)^2] =
\sum_{i=2}^5\frac{1}{d}\sum_{j=2}^{d} v(j)^2 
\le \frac{10\epsilon}{d}\le \frac{\epsilon\eta^6}{32} \delta\le \eta^4\delta.\end{equation}
By Markov's inequality, we have that with probability at least $(1-\delta)$ we have that
\[\|\Pi_{2,5}\Sigma^\top v\|\le \eta^2 .\]

If this happens, as $h_2$ is $4$-Lipschitz, $h_2(0)=0$ and $h_2$ depends only on the $2$ to $5$ coordinates, then \[|h_2(w)\|\le 4 \eta^2 \le \frac{1}{2}\eta.\] 

On the other hand, because $h_1$ is $2$-Lipschitz, one can also observe that \begin{align*}h_1(q_1,w_1,x)&\ge h_1(q_1,\eta e_1,x) -2\|w_1-\eta e_1\|\\
&=
h_1(q_1,\eta e_1,x) -2\|\Sigma^\top v\|\\
&\ge  \eta- 2\eta\epsilon \\
&> \frac{3}{4}\eta.\end{align*} Hence,
\begin{equation}\label{eq:w1} f_1(A;w_1,x)= h_1(q,w_1,x),\quad \mathrm{and}\quad \nabla f_1(A;w_1,x)= \nabla h_1(q,w_1,x).\end{equation}
Next, for $t_0=2$ note that by assumption
\begin{align*}\eta\epsilon &\ge \eta  \cdot \| \O_\Sigma(w_1)- \E_{x\sim D}[\nabla f(A;w_1,x)]\|\\
&=\left\|w_2- \left(w_1-\eta \E_{x\sim D}[\nabla f(A;w_1,x)]\right)\right\|\\
&=\left\|w_2 -(\eta e_1 + \Sigma^\top v) +\eta \E_{x\sim D}(\nabla h_1(q,w_1,x)\right\| & \textrm{\cref{eq:w1}}\\
&= \left\|w_2 -\eta e_1 - \Sigma^\top v +\eta\E_{x\sim D}[e_1 -\frac{1+q(x)}{16} e_2 -\frac{1-q(x)}{16}e_3]\right\|\\
&=\left\|w_2 -  \Sigma^\top v -\frac{1+q(D)}{16}\eta e_2 -\frac{1-q(D)}{16}\eta e_3\right\|\\
&\ge 
\left\|w_2 -\frac{1+q(D)}{16}\eta e_2 -\frac{1-q(D)}{16}\eta e_3]\right\|-2\eta\epsilon \\
&=\left\|w_2 -\bar{w}_2\right\|-2\eta\epsilon
.\end{align*}
where we denote $\bar{w}_2=\frac{1+q(D)}{16}\eta e_2 +\frac{1-q(D)}{16}\eta e_3$. Hence,
\begin{equation}\label{eq:w2bound} \|w_2-\bar w_2\| \le 3\eta \epsilon\end{equation}

Now, by $4$-Lipschitness of $h_2$:
\begin{align*}
    h_2(w_2)&\ge h_2(\bar w_2)-4\|w_2-\bar w_2\|\\
    &\ge \max\{(2+\xi_2)(1+q(D)) +(1-q(D)),(2+\xi_3)(1-q(D)) +(1+q(D))\} \eta -12\eta\epsilon \\
    &\ge \max\{3+ (1+\xi_2)q(D),3-(1+\xi_3)q(D)\}\eta -12\eta \epsilon 
    \\
    &\ge 3\eta -12\eta\epsilon \\
    &\ge \frac{3}{4}\eta. & \epsilon \le 1/48
\end{align*}
On the other hand, \[h_1(q,w_2,x)\le h_1(q,\bar w_2,x)+12\eta\epsilon \le \left|\frac{1+q(x)}{16}\frac{1+q(D)}{16}\right|\eta+\left|\frac{1-q(x)}{16}\frac{1-q(D)}{16}\right|\eta+ 12\eta\epsilon\le \frac{\eta}{32} +\frac{\eta}{4}<\frac{\eta}{2}.\]
We obtain then that for $t_0=2$:
\begin{equation}\label{eq:w2} f_1(A;w_2,x)= h_2(q,w_2,x),\quad \mathrm{and}\quad \nabla f_1(A;w_2,x)= \nabla h_2(q,w_2,x).\end{equation}
 We now need to show that the answer is accurate, contingent on \cref{eq:approx} holding for $t_0=2$.
Now we've already shown that 
\[ \E_{x\sim D}[\nabla f_1(A;w_2,x)]= \E_{x\sim D}[\nabla h_2(w)]=\nabla h_2(w_2),\]

Now, $h_2$ is random, and note that $w_2$ is independent of the random bits of $h_2$ as it is determined by the oracle $\O$, $w_1$ and $\bar \rho_1$, all are independent of $\xi_{2,3}$. The randomness ensures that with probability $1$, the two terms are not equal, and the subgradient at $w_2$ is well defined. Next, notice that for $g_2=\O_\Sigma(w_2) = \nabla h_2(w_2) +O(\epsilon)$, by assumption. We have, then, $g_2(4)<-1/2$ if and only if the first term is greater. Suppose, then, that the first term is greater.

Denote $\|\Pi_{4:5}(w_2-\bar w_2)\|=\xi$.
Then by the definition of $h_2$:
\[ (2+\xi_2)w_2(2)+ w_2(3)-w(4) > (2+\xi_3)w_2(3)+w_2(2)-w(5) .\]
By rearranging terms and noting that $\xi_2,\xi_3\sim U[0,\eta^3]$ we have that:
\[ w_2(2) > w_2(3) - 2\eta^3+w(4)-w(5).\]
which implies since $|w_2(2)-\bar w_2(2)|\le 3\eta\epsilon$, similarly $w_2(3)$, and $|w(4)|,|w(5)|\le \xi$:
\[ \frac{\eta (1+q_1(D))}{16} >  \frac{\eta(1-q_1(D))}{16} - 2\eta^3 -6\eta \epsilon -2\xi.\]
Rearranging terms:
\[ q_1(D) > -16\eta^2 -48\epsilon-\frac{2\xi}{\eta}>-49\epsilon-\frac{2\xi}{\eta},\]
where last inequality follows from $\eta<\sqrt{\epsilon/48}$. A similar calculation show that the reversed inequality holds if the inequality is reversed. We now only need to show that with probability at least $1-\delta$ $\xi<2\eta\epsilon$, this is similar to the way we bounded $\|\Pi_{2:5}v\|$. In particular note that
\[\xi = \|\Pi_{4:5}w_2\| = \|\Pi_{4:5} \Sigma^\top v+\eta \Pi_{4:5}O_{\Sigma}(w_2)\|\le \eta^3 + \eta \|\Pi_{4:5}\Sigma^\top \O_F(u_2)\|\le \eta\epsilon + \eta \|\Pi_{4:5}\Sigma^\top \O_F(u_2)\|,\]
where the last inequality is true since $\eta^2<\epsilon$. Next, if we set $v_2=\O_F(u_2)$, then $v_2$ is determined by $\bar \rho_1$, $w_1$ and $w_2$, conditioning on these vectors we know that $\Pi_{4:5}\Sigma^\top$ is a random embedding and permutation in $\reals^{d-3}$, and, by choice of $d$, we have that
\[ \|\Pi_{4:5}\Sigma^\top v_2\|^2 =\sum_{i=4}^5 (e_i^\top \Sigma^\top v_2)^2\le \frac{2}{d-3} \sum v_2(i)^2 \le \frac{2\epsilon}{d-3}\le \frac{2\epsilon\eta^3\delta^2}{4\cdot 2^T}\le  \epsilon^2\delta,\]
where, again, the last inequality follows from $\eta^2<\epsilon$. By Markov inequality we obtain the desired bound on $\xi$, which concludes the base case.

\paragraph{The induction step:} We now move on to the induction step. Suppose we proved the statement for $T-1$, and we want to prove it for $T$. We begin with the observation that, similar to the argument that bounds $\|\Pi_{2:5}v\|$, we have that since $d> \frac{4\cdot 2^T}{\eta^4\delta}$:
\[ \E[\|\Pi_{\pm}\Sigma^\top v\|^2]\le \eta^6\delta.\]
One can also observe, for the same reason,
\[ \E[\|\Pi_{\pm}\Sigma^\top \O_F(u_1)\|^2] \le \eta^6\delta,\]
Hence with probability $(1-2\delta$):
\begin{equation}
    \max\left\{\|\Pi_{\pm}\Sigma^\top \O_F(u_1)\|,\|\Pi_{\pm}\Sigma^\top v\|\right\}\le \eta^3\le \frac{\eta\epsilon}{32}.
\end{equation}
We assume that this event happened. It is then easy to see that for $i=\{1,2\}$ we have that
\begin{equation}\label{eq:fT} f_T(A,w_i,x)= \max\{f_1(A,\Pi_0w_i,x),f_{T-1}(A^+,\Pi_+w_i,x),f_{T-1}(A^-,\Pi_- w_i,x)\}= f_1(A,\Pi_0w_i,x).\end{equation}
As such the calculations of the first two iterates are the same as in the base case and we have that for $t_0=\{1,2\}$, the conditions are met. 

We now claim that for $t\ge 3$ we have that, if
\begin{equation}\label{eq:ismet} \|\O_{\Sigma}(w_{t'})-\E_{x\sim D}[\nabla f_T(A,w_{t'},x)]\|\le \epsilon, \quad {t'=1,\ldots, t},\end{equation}
then:
\begin{equation}\label{eq:fT3}f_T(A,w_t,x)=  
\begin{cases}
f_{T-1}(A,\Pi_+ w_t,x) & a_1=1\\
f_{T-1}(A,\Pi_- w_t,x) & a_1=-1 
\end{cases}\end{equation}
Let us consider the case $a_1=1$ (the other case is equivalent).  We've already shown that if $a_1=1$ then
\[ \nabla f_{T}(A;w_2,x) = \nabla f_{1}(A;\Pi_0 w_2,x) =\nabla h_2(w_2).\]

Hence, let 
\begin{align*}\bar w_3 &= \bar w_2 - \eta \nabla h_2(w_2)\\
&=\left(\frac{(1+q_1(D))}{16}-2-\xi_2\right)\eta e_2 + \left(\frac{(1-q_1(D))}{16}-1\right)\eta e_3 + \eta e_4 \\
\end{align*}
since
\[ \|\bar w_3-w_3\|\le \|\bar w_2- w_2\|+ \eta \|\O_{\Sigma}(w_2)-\E[\nabla f(A,w_2,x)]\| \le 4\eta\epsilon,\] 
we conclude that we can write 
\[w_3 = \eta e_4 + \eta v_3,\]
where $-3\eta-4\eta\epsilon\le  v_3(2)\le 0$, $-\eta-4\eta\epsilon \le v_3(3)\le 0$, and $v_3(1)\le 4\eta \epsilon$.

By definition of $h_1,h_2$ we have then:
\begin{equation}\label{eq:h1dni} h_1(q_1,w_3,x) \le v_3(1) + \frac{1}{8}|v_3(2)|+ \frac{1}{8}|v_3(3)|\le 4\eta \epsilon + \frac{3+4\epsilon}{8}\eta +\frac{1+4\epsilon}{8}\eta\le \frac{1}{2}\eta + 5\eta\epsilon<\frac{3}{4}\eta ,\end{equation}
and,
\begin{equation}\label{eq:h2dni} h_2(w_3) \le 0.\end{equation}
Taken together $f_1(A,w_3,x)< \frac{3}{4}\eta$. We also have that $f_{T-1}(A,\Pi_{-} w_3,x)<\frac{3}{4}\eta$. Indeed, since $\Pi_{-} \bar w_3 = 0$, we have that $\|\Pi_{-} w_3\| \le 4\eta \epsilon$, and one can observe that then
\[f_{T-1}(A^-,\Pi_{-} w_3,x)<\frac{3}{4}\eta.\]
It can also be seen, from the construction of $f_{T-1}$ and the fact that $w_3 = \bar w_3 + n$ where $\|n\|\le 4\eta \epsilon$ that
\[ f_{T-1}(A^+,\Pi_{+} w_3,x) > \frac{3}{4} \eta,\]
and
\[f_T(A,w_3,x)=  f_{T-1}(A^{+},\Pi_+ w_3,x), \textrm{and} \nabla f_T(A,w_3,x) = \nabla f_{T-1}(A^{+},\Pi_+ w_3,x) \]

Now we want to claim that this qualities remains for the whole length of the sequence. For this we need to show that for every $t\ge 3$:
\[ \max\{f_1(\Pi_0, w_t,x) ,f_{T-1}(A^{-},\Pi_- w_t,x)\} \le \frac{3}{4}\eta,\]
Indeed, if this holds, since $f_{T-1}(A^{+},\Pi_+ w_t,x) \ge \frac{3}{4}\eta$ and since whenever $f_{T-1}(A^{+},\Pi_+ w_t,x) =f_1(\Pi_0, w_t,x)= f_{T-1}(A^{-},\Pi_- w_t,x)= \frac{3}{4}\eta$ then $\nabla f_T= \nabla f_{T-1}(A^{+},\Pi_+ w_t,x) =0$, then this suffice.

First, it is easy to see that $f_{T-1}(A^{-},\Pi_- w_t,x)\le \frac{3}{4}\eta$. Indeed, for every $t$, $w_t$ is determined by the gradient access functions $\{(f_{T-1}(A,w_t,x), \Sigma \nabla f_T(A,w_t,x))\}_{t'\le t}$, which by induction depends on elements of the form $(f_{T-1}(A^{+}, \Pi_{+} w_{t'}), \Sigma \Pi_+^\top \nabla f_{T-1}(A^+, \Pi_{+} w_{t'},x))$ and $(h_1(\Pi_0,w_1),\Sigma \Pi_0^\top \nabla h_1(\Pi_0 w_1))$ and $h_2(\Pi_0,w_2), \Sigma \Pi_0^\top \nabla h_2(\Pi_0 w_2))$.

It can be seen, that given these vectors, the embedding $\Sigma \Pi_{-}^\top$ can thought of as a random embedding in a space of dimension at least $d-5- 4\cdot 2^{T-1}+7\ge \frac{4\cdot 2^{T-1} +7}{\eta^6\delta}$, (in other words, there is a subspace in $d$ of the given size where the coordinates are embedded and permuted randomly).  In particular,
\[\E[ \|\Pi_{+} \Sigma ^\top \O_F(u_t)\|^2 ] \le \sum_{i=1}^{4\cdot 2^{T-2}-7} \E[(e_i^\top \Sigma^\top \O_F(u_t))^2] = 4 \frac{4\cdot 2^{T-2}-7}{d-5- 4\cdot 2^{T-1}+7}\le \eta^6\delta.\]

and
\[ \nabla f_T(A,w_1,x) = \nabla f_{1}(A,\Pi_0 w_1 , x) = e_1 -\frac{1+q(x)}{16}e_2 - \frac{1-q(x)}{16}e_3.\] 

In particular, is determined by $\O_F(w_3)$ $\bar \rho_{2,\Sigma}(x) = \Sigma \Pi_0 \nabla f_T(A,w_2,x)$, and $\bar \rho_{2,\Sigma}(x)= \Sigma\Pi_0  \nabla f_T(A,w_1,x)$.

We begin with the observation that, given $\Sigma\Pi_0$, the embedding $\Sigma \Pi_{+}$ is still a random embedding in $\reals^{d-5}$, with the exception that the embedding of $\Sigma e_4$ is determined by $\Sigma \Pi_0$ but since all other coordinates are orthogonal to the range $\Pi_0$, we have that all other coordinates in the subspace are randomly embedded and permuted in $\reals^{d-5}$. A similar argument holds for $\Sigma \Pi_{-}$ with $\Sigma e_5$. As $w_3$ is determined by $\Sigma \Pi_0$

and thus, we can treat $\Sigma \Pi_{+}$ and $\Sigma \Pi_{-}$ as random hidings where the first coordinate is known to dimension $d>\frac{4\cdot 2^T}{\eta^6\delta}-5> \frac{4\cdot 2^{T-1}}{\eta^6\delta}$.

Now we prove by induction that for all $t>3$ we have that the equality remains. This can be seen by the fact that the gradient is almost completely orthogonal to the coordinates on which the other terms depend on except $w(4)$, but $f_{T-1}(A,\Pi_+ w_3,x)$ is decreasing in $w(4)$, hence we conclude that $\Pi_0 \nabla f_{T-1}(A,\Pi_+ w_t,x)\ge 0$, which means that $\Pi_0 w_t$ can increase at each iteration by at most $\eta \epsilon$. So we have that:

\[ \|\Pi_0 w_t -\Pi_0 w_3\|\le \eta \epsilon T,\]
and by 4-Lipschitzness of these terms as well as \cref{eq:h1dni,eq:h2dni}
\[ \max\{h_1(q_1,\Pi_0 w_t,x),h_2(\Pi_0 w_t)\} \le 4\eta\epsilon T+ \max\{h_1(q_1,\Pi_0 w_3,x),h_2(\Pi_0 w_3)\}\le \frac{1}{2}\eta +5\eta\epsilon(T+1) \le  \frac{3}{4}\eta,\]
where the last inequality follows from $\epsilon \le 1/24$, $T\le 1/(24\epsilon)$.

This proves that \cref{eq:fT3} holds. Now we can use the induction hypothesis. First, notice that if \cref{eq:ismet} is met, then by our assumption and contraction of a projection operator we have for $t'=1,\ldots, t$: 
\begin{align}\label{eq:ismetmet}
\|\O_{\Sigma\Pi_+^\top}(w_{t'})-\E_{x\sim D}[\nabla f_{T-1}(A^{+},\Pi_{+}w_{t'},x)]\|
&=
\|\Pi_+\O_{\Sigma}(w_{t'})-\E_{x\sim D}[\Pi_{+}\Pi_{+}^\top\nabla  f_{T-1}(A^{+},\Pi_{+}w_{t'},x)]\|\\
&\le
\|\O_{\Sigma}(w_{t'})-\E_{x\sim D}[\Pi_{+}^\top\nabla  f_{T-1}(A^{+},\Pi_{+}w_{t'},x)]\|\\
&=
\|\O_{\Sigma}(w_{t'})-\E_{x\sim D}[\nabla  f_{T}(A^{+},\Pi_{+}w_{t'},x)]\|\\
&\le \epsilon, \quad ,\end{align}

Also, we have that the sequence $\Pi_{+}w_3,\ldots,\Pi_{+}w_T$ initializes at \[\Pi_{+} w_3= \Pi_{+}\bar w_3 +\Pi_{+}w_3-\Pi_{+}\bar w_3=\eta e_4 + \Sigma^\top v,
\]
where
\begin{align*}\|v\|&=\|\Sigma \Pi_{+}\bar w_3-\Sigma \Pi_{+}w_3\| \\
&=\|\Pi_{+}\bar w_3-\Pi_{+}w_3\| \\
&\le \|\Pi_{+}\bar w_2-\Pi_{+}w_2\| + \eta \|\O_{\Sigma}(2)-\E_{x\sim D}[\nabla f(A,w_{2},x)]\|\\
&\le \|\Pi_{+}w_2\| +\eta\epsilon\\
& = \| \Pi_{+}\Sigma^\top v\|+ \|\Pi_{+}\O_{\Sigma}(1)\| + \eta \epsilon &w_2=w_1-\eta \O_{\Sigma}(1). \\
&\le 2\eta^3 +\eta\epsilon \\
&\le 2\eta \epsilon.
\end{align*}
Also, note that:
\begin{align*}\kappa^{(T)}_i(\O_{\Sigma}(1),\ldots, \O_{\Sigma}(\T(i))& = \kappa_{i-1}^{(T-1)}(\Pi_{+}\O_{\Sigma}(w'_1),\ldots, \Pi_{+}\O_{\Sigma}(w'_{\T(i)}))\\
&=
\kappa_{i-1}^{(T-1)}(\O'_{\Sigma\Pi_{+}^\top}(w'_1),\ldots, \O'_{\Sigma\Pi_{+}^\top}(w'_{\T(i)}))
,\end{align*}
So to summarize, the sequence $\Pi_{+}w_3,\ldots, \Pi_{+}w_T$ and $a_2,\ldots, a_T$, is the trajectory sequence and answering sequence of the GD wrapper of $f_{T-1},\kappa^{(T-1)},\rho^{(T-1)}$, against the oracle $\O_{\Sigma \Pi_+^\top}$

If \cref{eq:ismetmet} is met then, we obtain that the sequence $a_2,\ldots, a_T$ is indeed correct,
with probability $1-2\delta (T-1)$. Applying union bound we obtain the desried result.
\end{proof}

\subsection{Proof of \cref{lem:boolred}}\label{sec:boolred}

By assumption, there exists a $(2\epsilon_1,\epsilon_2,\delta)$-GD wrapper with $\eta>0$ that answers $T_1$ queries and perform $T_2$ iterations. Without loss of generality we may assume that the wrapper initializes at $w_1=0$. We also assume the existence of an $(\epsilon_1,\delta)$ \fla Oracle $\O_F$. 

Given the \fla oracle and an analyst $A$ we define a statistical query oracle that works as follows:
\begin{itemize}
    \item Receive query $q_1$ from analyst $A$.
    \item For $t=1,\ldots, T_1$
    \begin{itemize}

     \item For $s= \T(t-1)+1,\ldots, \T(t)$ (where we define $\T(0)=1$)
        \begin{itemize}
            \item Provide the gradient access function $ \bar \rho_s(x)=(f(A,w_s,x),\nabla f(A,w_s,x))$ to $\O_F$.        
            \item set $g_s=\O_F(w_s)$.
            \item Set $w_{s+1}= w_s-\eta g_s$.
        \end{itemize}
        \item Provide the the answer $a_t = \kappa_{t}(g_1,\ldots,g_t)$ and receive $q_{t+1}$.
    \end{itemize}
\end{itemize}

Next, for the proof let us fix the random bits of the wrapper, the analyst and oracles. And we assume that $w_1,\ldots, w_T$ is the sequence generated by the recursive relation

\[w_1=0,\quad w_{t+1}=\Pi(w_t-\eta g_t),\]
where $\O_F$ at each iteration is provided with a function $\hat \rho_t(x) = (f(A,w_t,x),\nabla f(A,w_t,x))$. Now Suppose that the internal random bits of $\O_F$ are such that for all $t\le T_2$:
\[ \|g_t - \E_{x\sim D} [\nabla f(A;w_t,x)]\| \le \epsilon_1.\]
\begin{equation}\label{eq:flapost} \|\E_{x\sim D} [\nabla f(A;w_t,x)]- \frac{1}{m}\sum_{i=1}^m [\nabla f(A;{w}_t,x_i)]\| \le \epsilon_1.\end{equation}
This event happens with probability $(1-2\delta)$, and we throughout assume that it occured.

Under this assumption, by the property of an accurate GD wrapper, we can show by induction that the sequence of answers is $\epsilon_2$--accurate w.r.t the population and $2\epsilon_2$ accurate w.r.t the empirical distribution which can be seen to entail $(\epsilon_2,\delta_1)$-accuracy and $(2\epsilon_2,\delta_1)$-post-hoc generalization as in \cref{def:accF,def:posthocF}.
\paragraph{Acknowledgements}
The author would like to thank Tomer Koren and Uri Stemmer for various discussions on the topic. The author is supported by the Israel Science Foundation (grant number 2188/20), and by a grant from Tel Aviv University Center for AI and Data Science (TAD) in collaboration with Google, as part of the initiative of AI and DS for social good.
\bibliographystyle{abbrvnat}
\bibliography{bio}
\appendix
\section{Proof sketch for \cref{thm:amir}}\label{prf:amir}
The construction is a simplification of the construction in \cite{amir2021sgd}. For every $x\in \{0,1\}^d$ we define

\[ f(w,x) = \frac{1}{2}\sqrt{\sum x(i)\max\{0,w(i)\}^2} + \gamma v_x\cdot w,\]
where
\[v_x(i) = \begin{cases} -\frac{1}{m}& x(i)=0\\
 1 & x(i)=1\end{cases}.\]
and $\gamma< 1/20$ is sufficiently small so that the function $f$ is $1$-Lipschitz.

We define the distribution on $x$ so that every $x(i)$ are i.i.d Bernoulli variables.
Initializing at $w_1=0$ we have that $\nabla f(w_1,x) = \gamma v_x$, and
\[w_2= w_1 -\frac{\eta}{m}\sum_{j=1}^m v_{x_j},\]
Notice that $w_2(i)>0$ if and only if $x_j(i)=0$ for all $j=1,\ldots, m$, hence 
\[ x_j(i) \nabla \max\{0, w_2(i)\} = 0,\] for all $i$ and $j$. In turn:
\[ \|\nabla f(w_2,x_i)\| =\|\gamma \frac{1}{m} \sum_{j=1}^m v_{x_j}\| \le \gamma  \le 1/20.\]
On the other hand, suppose $w_2$ is such that there are coordinates such that $x_j(i)=0$ for all $j=1\ldots, m$, then:
\[\| \nabla F(w_2) \| \ge \left\|\E_{x\sim D} \left[\sum_{\{i:w_2(i)>0\}}\frac{x(i)\cdot w_2(i)}{\sqrt{\sum_{\{i:w_2(i)>0\}} x(i)\cdot w(i)^2}}e_i\right]\right\|- \gamma m \ge 1-\gamma m \ge 19/20.\]
Note that if $d$ is sufficiently large then, indeed, with probability $1/2$ there will be some coordinate $i$, where $x_j(i)=0$ for all $j$.

\end{document}